\algrenewcommand\algorithmicindent{0.5em} 
\newtheorem{lemma}{Lemma}
\DeclareMathOperator*{\argmin}{arg\,min}
\newcommand{\calG}{\ensuremath{\mathcal{G}}\xspace}
\newcommand{\calR}{\ensuremath{\mathcal{R}}\xspace}
\newcommand{\calM}{\ensuremath{\mathcal{M}}\xspace}
\newcommand{\calB}{\ensuremath{\mathcal{B}}\xspace}
\newcommand{\calP}{\ensuremath{\mathcal{P}}\xspace}
\newcommand{\calO}{\ensuremath{\mathcal{O}}\xspace}
\def\naive{{na\"{\i}ve}\xspace}
\newtheorem{thm}{Theorem}
\newtheorem{definition}[thm]{Definition}
\newcommand{\ignore}[1]{}
\newcommand\Gfull{\ensuremath{G^{\textrm{full}}}\xspace}
\newcommand\Gcov{\ensuremath{G^{\textrm{cov}}}\xspace}
\newcommand\Guncov{\ensuremath{G^{\textrm{uncov}}}\xspace}
\newcommand\Guncovp{\ensuremath{G'^{\textrm{uncov}}}\xspace}
\newcommand\Shome{\ensuremath{s_{\textrm{home}}}\xspace}
\newcommand\Tbound{\ensuremath{T_{\textrm{bound}}}\xspace}
\newcommand\Trc{\ensuremath{t_{\textrm{rc}}}\xspace}
\newcommand\Ssc{\ensuremath{s_{\textrm{sc}}}\xspace}
\newcommand\Sstart{\ensuremath{s_{\textrm{start}}}\xspace}
\newcommand\Xexec{\ensuremath{x_{\textrm{exec}}}\xspace}
\def\os#1{\textcolor{magenta}{#1}}
\def\AlgFontSize{\small}
\def\CaptionTextSize{\small}
\begin{document}

\title{Provably Constant-time Planning and Replanning for Real-time Grasping Objects off a Conveyor Belt}


\author{ Fahad Islam$^{1}$,
  Oren Salzman$^{2}$,
  Aditya Agarwal$^{1}$,
  Maxim Likhachev$^{1}$\\
    \authorblockA{
        $^{1}$The Robotics Institute, Carnegie Mellon University}
    \authorblockA{
        $^{2}$Technion-Israel Institute of Technology
        }
    \thanks{This work was supported by the ONR grant N00014-18-1-2775 and the ARL grant W911NF-18-2-0218 as part of the A2I2 program.}
}



%

\maketitle


\begin{abstract}
In warehouse and manufacturing environments, manipulation platforms are frequently deployed at conveyor belts to perform pick and place tasks. Because objects on the conveyor belts are moving, robots have limited time to pick them up. This brings the requirement for fast and reliable motion planners that could provide provable real-time planning guarantees, which the existing algorithms do not provide. Besides the planning efficiency, the success of manipulation tasks relies heavily on the accuracy of the perception system which is often noisy, especially if the target objects are perceived from a distance. For fast moving conveyor belts, the robot cannot wait for a perfect estimate before it starts executing its motion. In order to be able to reach the object in time it must start moving early on (relying on the initial noisy estimates) and adjust its motion on-the-fly in response to the pose updates from perception. We propose an approach that meets these requirements by providing provable constant-time planning and replanning guarantees. We present it, give its analytical properties and show experimental analysis in simulation and on a real robot.
\end{abstract}

\IEEEpeerreviewmaketitle

\section{Introduction}

Conveyor belts are widely used in automated distribution, warehousing, as well as for manufacturing and production facilities. In the modern times robotic manipulators are being deployed extensively at the conveyor belts for automation and faster operations~\cite{zhang2018gilbreth}. In order to maintain a high-distribution throughput, manipulators must pick up moving objects without having to stop the conveyor for every grasp. In this work, we consider the problem of motion planning for grasping moving objects off a conveyor. An object in motion imposes a requirement that it should be picked up in a short window of time. The motion planner for the arm, therefore, must compute a path within a bounded time frame to be able to successfully perform this task.

Manipulation relies on high quality detection and localization of moving objects. When the object first enters the robot's field of view, the initial perception estimates of the object's pose are often inaccurate. Consider the example of an object (sugar box) moving along the conveyor towards the robot in Fig.~\ref{fig:intro_pic}, shown through an image sequence as captured by the robot's Kinect camera in Fig.~\ref{fig:pose_sequence}. 
The plot in Fig.~\ref{fig:pose_sequence} shows the variation of the error between the filtered input point cloud and a point cloud computed from the predicted pose from our ICP based perception strategy as the object gets closer to the camera. We observe that the error decreases as the object moves closer, indicating that the point clouds overlap more closely due to more accurate pose estimates closer to the camera.

However, if the robot waits too long to get an accurate estimate of the object pose, the delay in starting plan execution could cause the robot to miss the object. The likelihood of this occurring increases proportionately with the speed of the conveyor. Therefore, the robot should start executing a plan computed for the initial pose and as it gets better estimates, it should repeatedly replan for the new goals. However, for every replanning query, the time window for the pickup shrinks. This makes the planner's job difficult to support real-time planning.

\begin{figure}[t]
    \centering
    \includegraphics[trim=0 50 0 100, clip, width=0.32\textwidth]{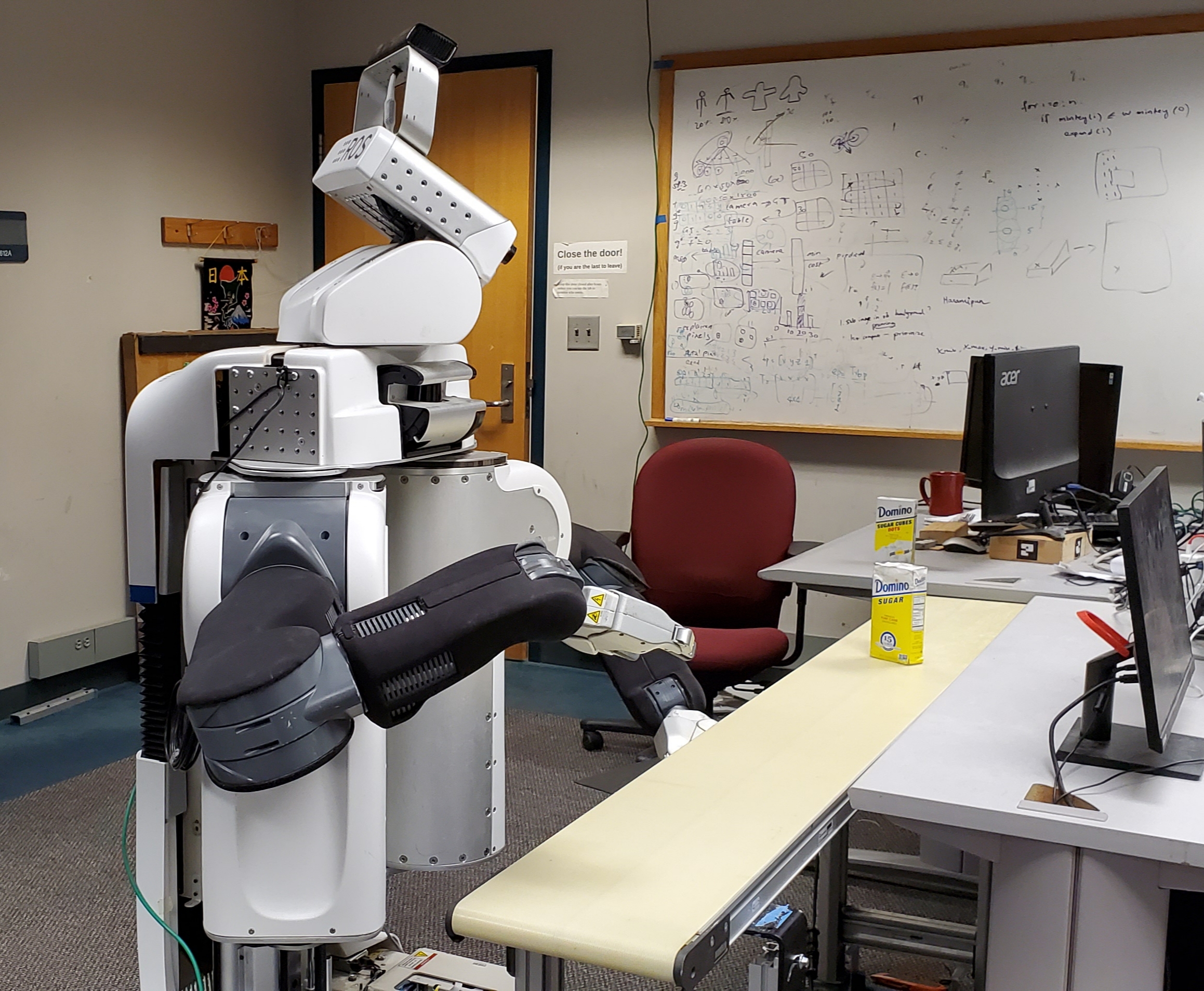}
    \caption{
    \CaptionTextSize
    A scene demonstrating the PR2 robot picking up a moving object (sugar box) off a conveyor belt.}
    \label{fig:intro_pic}
    \vspace{-4mm}
\end{figure}

Furthermore, the planning problem is challenging because the motion planner has to account for the dynamic object and thus plan with time as one of the planning dimension. It should generate a valid trajectory that avoids collision with the environment around it and also with the target object to ensure that it does not damage or topple it during the grasp. Avoiding collisions with the object requires precise geometric collision checking between the object geometry and the geometry of the manipulator. The resulting complexity of the planning problem makes it infeasible to plan online for this task.

Motivated by these challenges, we propose an algorithm that leverages offline preprocessing to provide bounds on the planning time when the planner is invoked online. Our key insight is that in our domain the manipulation task is highly repetitive. Even for different object poses, the computed paths are quite similar and can be efficiently reused to speed up online planning. Based on this insight, we derive a method that precomputes a representative set of paths with some auxiliary datastructures offline and uses them online in a way that provides \emph{constant-time planning guarantee}. Here, we assume that the geometric models of the target objects are known apriori. To the best of our knowledge, our approach is the first to provide constant-time planning guarantee on generating motions all the way to the goal for a dynamic environment.

We experimentally show that constant-time planning and replanning capability is necessary for a successful conveyor pickup task. Specifically if we only perform one-time planning, (namely, either following the plan for the initial potentially inaccurate pose estimate or from a delayed but accurate pose estimate) the robot frequently fails to pick the object.
\begin{figure}[t]
    \centering
    \begin{subfigure}{.225\textwidth}
        \includegraphics[height=2.9cm]{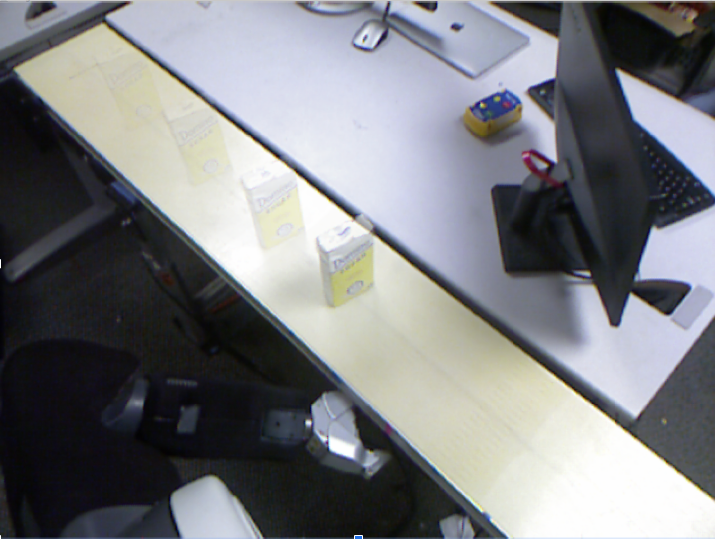}
        \caption{}
        \label{fig:obj1}
    \end{subfigure}
    \begin{subfigure}{0.225\textwidth}
        \includegraphics[height=2.9cm]{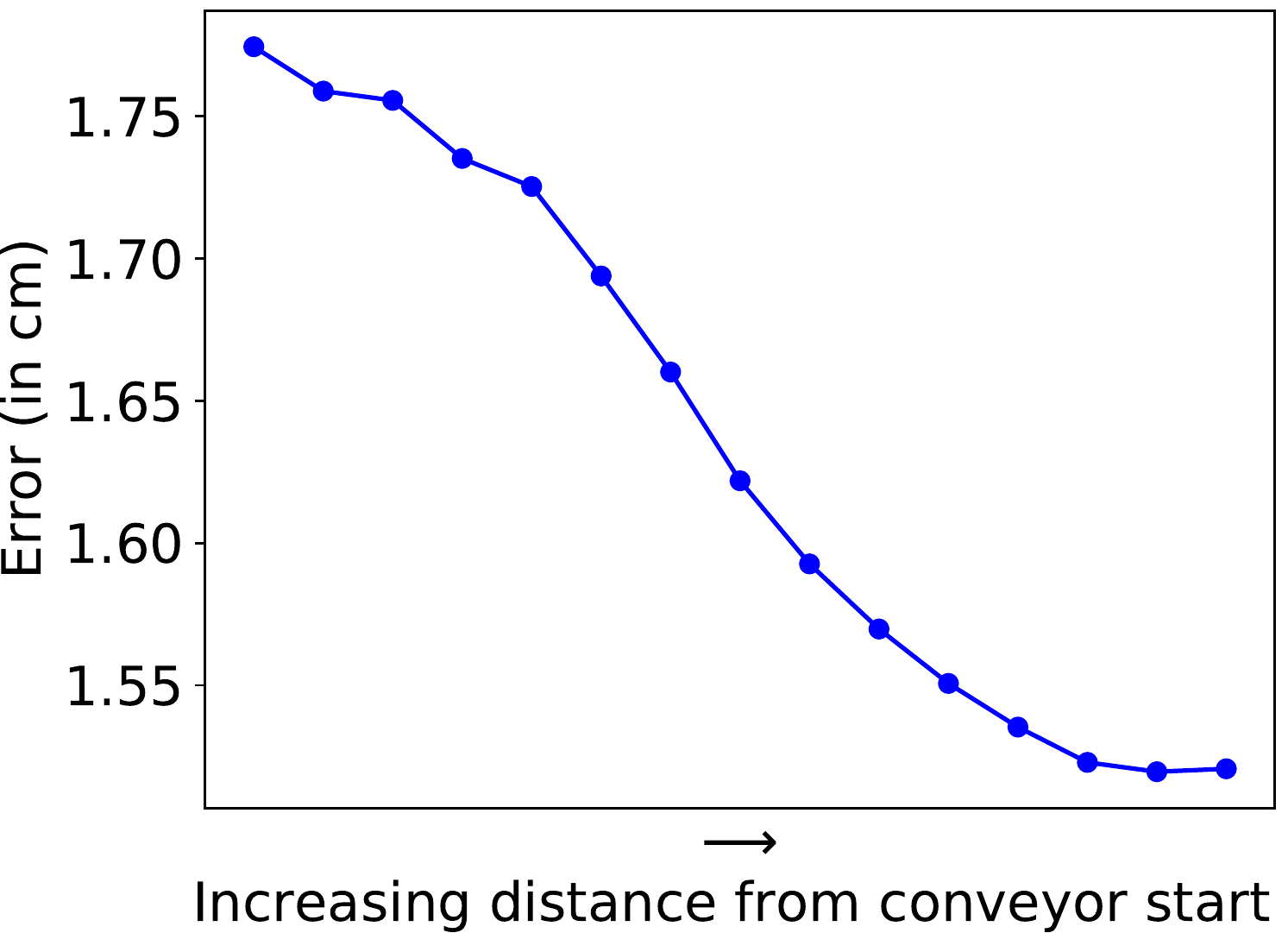}
        \caption{}
        \label{fig:obj2}
    \end{subfigure}
    \caption{
    \CaptionTextSize
    (\subref{fig:obj1})~Depiction of an object moving along a conveyor towards the robot.
    (\subref{fig:obj2})~Pose error as a function of the distance from the conveyor's start. Specifically we use ADD-S error \cite{add_metric}.
    }
    \label{fig:pose_sequence}
    \vspace{-4mm}
\end{figure}

\section{Related work}
\subsection{Motion planning for conveyor pickup task}
Existing work on picking moving objects has focused on different aspects of the problem ranging from closed-loop controls to object perception and pose estimation, motion planning and others~\cite{allen1993automated, han2019toward, stogl2017tracking, zhang2018gilbreth}. 
Here, we focus on motion-planning related work. Time-configuration space representation was introduced to avoid moving obstacles~\cite{fraichard1993dynamic,cefalo2013task,yang2018planning}. Specifically in~\cite{yang2018planning}, a bidirectional sampling-based method with a time-configuration space representation was used to plan motions in dynamic environments to pickup moving objects. While their method showed real-time performance in complex tasks, it used fully-specified goals; namely knowing the time at which the object should be picked, which weakens the completeness guarantee. Furthermore their method is probablistically complete and therefore, does not offer constant-time behavior.
Graph-search based approaches have also been used for the motion-planning problem~\cite{menon2014motion, cowley2013perception}. The former uses a kinodynamic motion planner to smoothly pick up moving objects i.e., without an impactful contact. A heuristic search-based motion planner that plans with dynamics and could generate optimal trajectories with respect to the time of execution was used. While this planner provides strong optimality guarantees, it is not real-time and thus cannot be used online.
The latter work demonstrated online real-time planning capability. The approach plans to a pregrasp pose with pure kinematic planning and relies on Cartesian-space controllers to perform the pick up. The usage of the Cartesian controller limits the types of objects that the robot can grasp.

\subsection{Preprocessing-based planning}
Preprocessing-based motion planners often prove beneficial for real-time planning. They analyze the configuration space offline to generate some auxiliary information that can be used online to speed up planning. 
Probably the best-known example is the Probablistic Roadmap Method (PRM)~\cite{kavraki1996probabilistic} which precomputes a roadmap that can answer any query by connecting the start and goal configurations to the roadmap and then searching the roadmap. PRMs are fast to query yet they do not provide constant-time guarantees.
Moreover, in our case, to account for a moving object, they would require edge re-evaluation which is often computationally expensive.

A provably constant-time planner was recently proposed in~\cite{ISL19}. Given a start state and a goal region, it precomputes a compressed set of paths that can be utilized online to plan to any goal within the goal region in bounded time. As we will see, our approach while algorithmically different, draws some inspiration from this work.
Both of the above two methods (\cite{ISL19,kavraki1996probabilistic}) are mainly targetting pure kinematic planning and thus they cannot be used for the conveyor-planning problem which is dynamic in nature.

Another family of preprocessing-based planners utilizes previous experiences to speed up the search~\cite{BAG12,CSMOC15,PCCL12}. Experience graphs~\cite{PCCL12}, provide speed up in planning times for repetitive tasks by trying to reuse previous experiences. These methods are also augmented with sparsification techniques (see e.g.,~\cite{DB14,SSAH14}) to reduce the memory footprint of the algorithm.
Unfortunately, none of the mentioned algorithms provide fixed planning-time guarantees that we strive for in our application.

\subsection{Online replanning and real time planning}
The conveyor-planning problem can be modelled as a Moving Target Search problem (MTS) which is a widely-studied topic in the graph search-based planning literature~\cite{ishida1991moving,ishida1995moving,koenig2007speeding,sun2010moving}. 
These approaches interleave planning and execution incrementally and update the heuristic values of the state space to improve the distance estimates to the moving target. Unfortunately, in high-dimensional planning problems, this process is computationally expensive which is why these approaches are typically used for two-dimensional grid problem such as those encountered in video games. More generally, real-time planning is widely considered in the search community (see, e.g.,~\cite{KL06,KS09,korf1990real}).
However, as mentioned, these works are typically applicable to low-dimensional search spaces.
%

\section{Problem definition}
Our system is comprised of 
a robot manipulator~$\calR$,
a conveyor belt~$\calB$ moving at some known velocity,
a set of known objects~$\calO$ that need to be grasped and 
a perception system~$\calP$ that is able to estimate the type of object and its location on~$\calB$.

Given a pose $g$ of an object $o \in \calO$, our task is to plan the motion of $\calR$ such that it grasps~$o$ from~$\calB$ at some future time.
Unfortunately, the perception system $\calP$ may give inaccurate object poses.
Thus, the pose $g$ will be updated by~$\calP$ as $\calR$ is executing its motion. 
To allow for $\calR$ to move towards the updated pose in real time, we introduce the additional requirement that planning should be done within a user-specified time bound~\Tbound.
For ease of exposition, when we say that we plan to a pose $g$ of $o$ that is given by $\calP$, 
we mean that we plan the motion of $\calR$ such that it will be able to pick~$o$ from~$\calB$ at some future time. 
This is explained in detail in Sec.~\ref{sec:eval} and in Fig.~\ref{fig:pe}.

We denote by $\Gfull$ the discrete set of initial object poses on $\calB$ that $\calP$ can perceive.
Finally, we assume that $\calR$ has an initial state \Shome corresponding to the time $t=0$ from which it starts planning to grasp any object.

Roughly speaking, the objective, following the set of assumptions we will shortly state, is to enable planning and replanning to any goal pose $ g \in \Gfull$ in bounded time~\Tbound regardless of $\calR$'s current state.
To formalize this idea, let us introduce the notion of \emph{reachable} and \emph{covered} states:

\vspace{2mm}
\begin{definition}
    A goal pose $g \in \Gfull$ is said to be \emph{reachable} from a state $s$ if there exists a path from $s$ to $g$ and it can be computed in finite time.
\end{definition}

\vspace{2mm}
\begin{definition}
    A reachable pose $g \in \Gfull$ is said to be \emph{covered} by a state $s$ if 
    the planner can find a path from $s$ to $g$ within time~\Tbound.
\end{definition}

Thus, we wish to build a system such that 
for any state $s$ that the system can be in 
and every reachable goal pose $g \in \Gfull$ from $s$ updated by~$\calP$,
$g$ is covered by $s$.

We are now ready to state the assumptions for which we can solve the problem defined.

\begin{enumerate}[label={\textbf{A\arabic*}},leftmargin=0.75cm]
    
\ignore{
    \item \label{assum:2} Given a path 
    $\Pi = \{s_0, \ldots, s_k \}$ 
    s.t. $s_0 = \Shome$ and $s_k \in \Gfull$, 
    we have that $G^{\rm reach}(s_{i+1}) \subset G^{\rm reach}(s_{i})$.
    Namely, the reachable set of goals for a state on the path is a subset of the reachable set of every other state on that path that exists before it.
    \os{Oren - Didn't we agree that this is not an assumption but a property?}
}

    \item \label{assum:4} There exists a replan cutoff time $t=\Trc$, after which the planner does not replan and continues to execute the last planned path.

    \item \label{assum:5} For any time $0 \leq t \leq \Trc$, the environment is static. Namely, objects on~$\calB$ cannot collide with $\calR$ during that time.

    \item \label{assum:3} The pose estimation error of $\calP$ is bounded by a distance $\varepsilon_{\calP}$.
        

\end{enumerate}

Assumptions~\ref{assum:4}-\ref{assum:5} enforce a requirement that $\calP$ must converge on an accurate estimate $g$ before \Trc, and until \Trc, $o$ is guaranteed to be at a safe distance from $\calR$.
Assumption~\ref{assum:3} bounds the maximum error of $\calP$ and is explained in detail in Sec.~\ref{sec:eval} and in Fig.~\ref{fig:pe}


\section{Algorithmic framework}
\label{subsec:strawman}
Our approach for constant-time planning relies on a \emph{preprocessing} stage that allows to efficiently compute paths in a \emph{query} stage to any goal (under Assumptions~\ref{assum:4}-\ref{assum:3}). 
Before we describe our approach, we start by describing a \naive method that solves the aforementioned problem but requires a prohibitive amount of memory.
This can be seen as a warmup before describing our algorithm which exhibits the same traits but does so in a memory-efficient manner.

\subsection{Straw man approach}
\begin{figure}[t]
    \centering
    \hspace{3mm}
    \begin{subfigure}{.225\textwidth}
        \includegraphics[width=0.9\textwidth]{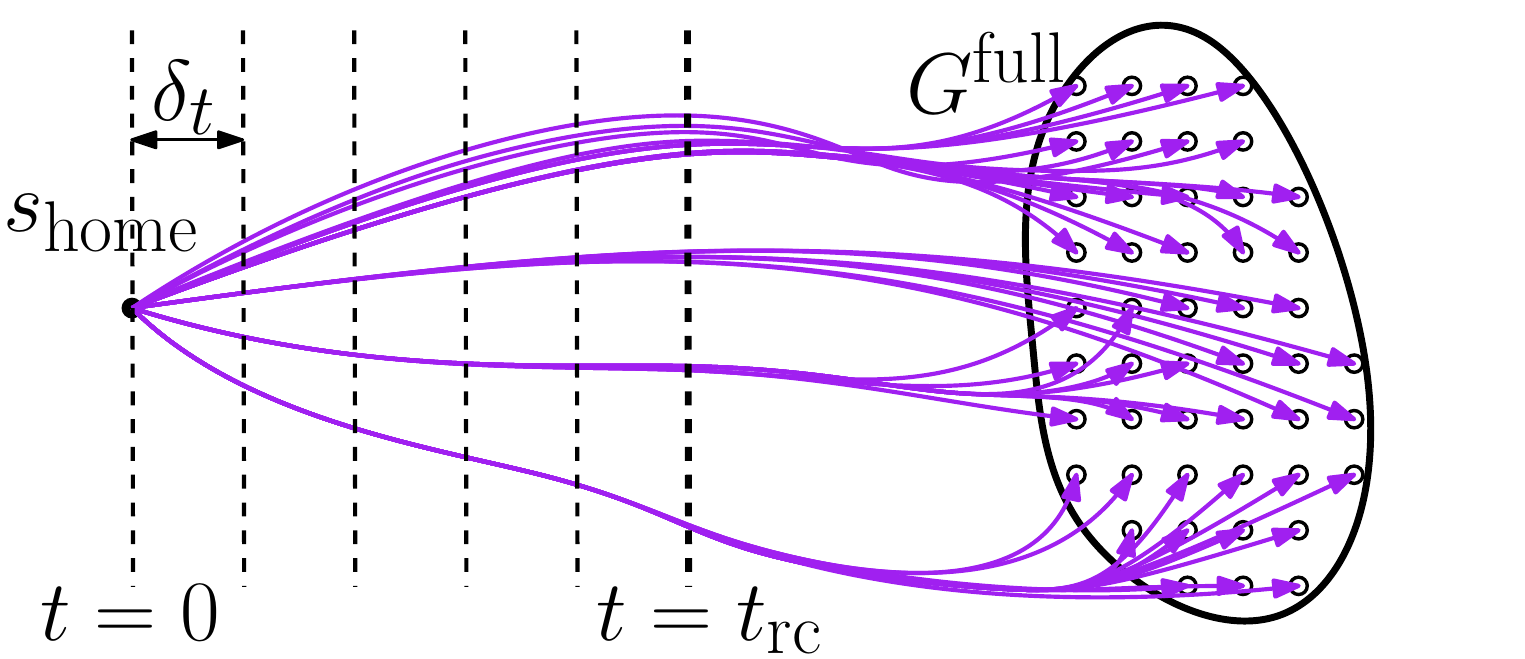}
        \caption{}
        \label{fig:naive1}
    \end{subfigure}
    \begin{subfigure}{0.225\textwidth}
        \includegraphics[width=0.9\textwidth]{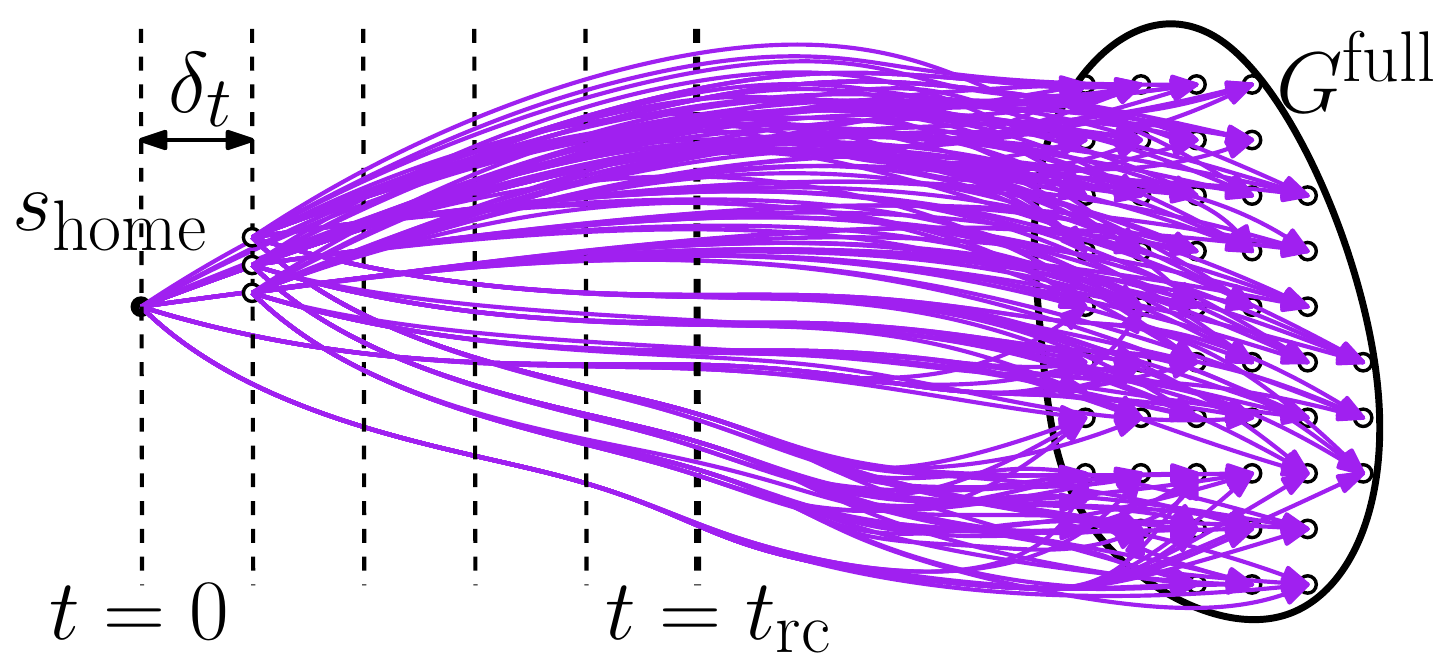}
        \caption{}
        \label{fig:naive2}
    \end{subfigure}
    \caption{
    \CaptionTextSize
    The figures show paths discretized from timesteps $t_0$ to $\Trc$ with steps of size $\delta_t$.
    (\subref{fig:naive1})~At $t=0$, the algorithm computes $n_{\rm goal}$ paths, that is from \Shome to every $g \in \Gfull$.
    (\subref{fig:naive2})~At $t=\delta_t$, the algorithm computes $n_{\rm goal}^2$ paths, that is from all $n_{\rm goal}$ replanable states at $t=\delta_t$ to every $g \in \Gfull$ (here we only show paths from three states).
    Thus, the number of paths increases exponentially at every timestep.
    }
    \label{fig:naive}
    \vspace{-4mm}
\end{figure}

We first compute from \Shome a path $\pi_g$ to every reachable $g \in \Gfull$. 
These paths can be stored in a lookup (hash) table which can be queried in constant time (assuming perfect hashing~\cite{czech1997perfect}). Thus, all goals are covered by \Shome and this allows us to start executing a path once $\calP$ gives its initial pose estimate.
However, we need to account for pose update while executing~$\pi_g$. 
Following \ref{assum:4} and \ref{assum:5}, this only needs to be done up until time~\Trc.
Thus, we discretize each path uniformly with resolution $\delta_t$.
%
We call all states that are less than \Trc time from \Shome \emph{replanable states}.

Next, for every replanable state along each path $\pi_g$, we compute a new path to all goals. 
%
This will ensure that all goals are covered by all replanable states. Namely, it will allow to immediately start executing a new path once the goal location is updated by~$\calP$.
Unfortunately,~$\calP$ may update the goal location more than once. Thus, this process needs to be performed recursively for the new paths as well.

The outcome of the preprocessing stage is a set of precomputed collision-free paths starting at states that are at most $\Trc$ from \Shome and end at goal states.
The paths are stored in a lookup table $\calM: S \times \Gfull \rightarrow \{ \pi_1, \pi_2, \ldots \}$ that can be queried in $O(1) (< \Tbound)$ time to find a path from any given $s \in S$ to $g \in \Gfull$.

In the query stage we obtain an estimation $g_1$ of the goal pose by $\calP$. 
The algorithm then retrieves the path~$\pi_1(\Shome,g_1)$ (from~\Shome to~$g_1$) from $\calM$ and the robot starts executing~$\pi_1(\Shome,g_1)$.
For every new estimation $g_i$ of the goal pose obtained from~$\calP$  while the system is executing path $\pi_{i-1}(s_{i-1},g_{i-1})$, the algorithm retrieves from $\calM$ the path $\pi_i(s_i,g_i)$ from the first state~$s_i$ along $\pi_{i-1}(s_{i-1},g_{i-1})$ that is least $\Tbound$ away from~$s_{i-1}$. The robot~$\calR$ will then start executing~$\pi_i(s_i,g_i)$ once it reaches~$s_i$.

Clearly, every state is covered by this brute-force approach, however it requires a massive amount of memory.
Let $n_{\rm goal} = \vert \Gfull \vert$ be the number of goals and
$\ell$ be the number of states between \Shome and the state that is \Trc time away.
This approach requires precomputing and storing $O(n_{\rm goal}^\ell)$ paths which is clearly infeasible (see Fig.~\ref{fig:naive}).
In the next sections, we show how we can dramatically reduce the memory footprint of the approach without compromising on the system's capabilities.

\subsection{Algorithmic approach}
While the straw man algorithm presented  allows for planning to any goal pose $ g \in \Gfull$ in bounded time~\Tbound, its memory footprint is prohibitively large.
We suggest to reduce the memory footprint by building on the observation that many paths to close-by goals traverse very similar parts of the configurations space.


The key idea of our approach is that instead of computing (and storing) paths to all reachable goals in \Gfull, we compute a relatively small subset of so-called ``root paths" that can be reused in such a way that we can still cover \Gfull fully. Namely, at query time, we can reuse these paths to plan to any $g\in \Gfull$ within \Tbound. The idea is illustrated in Fig.~\ref{fig:crp}.

First, we compute a set of root paths $\{\Pi_1, \ldots, \Pi_k \}$ from \Shome to cover \Gfull by \Shome (here we will have that $k \ll n_{\rm goal})$ 
Next, the algorithm recursively computes for all replanabale states along these root paths, additional root paths so that their reachable goals are also covered.
During this process, additional root paths are computed only when the already existing set of root paths does not provide enough guidance to the search to cover \Gfull i.e to be able to compute a path to any $g \in \Gfull$ within \Tbound.
%
%
The remainder of this section formalizes these ideas.

\subsection{Algorithmic building blocks}
We start by introducing the algorithmic building blocks that we use.
Specifically, we start by describing the motion planner that is used to compute the root paths 
and then continue to describe how they can be used as \emph{experiences} to efficiently compute paths to other goals.
\subsubsection{Motion planner}
We use a heuristic search-based planning approach with motion primitives (see, e.g,~\cite{CCL10,CSCL11,LF09})
as it allows for deterministic planning time which is key in our domain.
Moreover, such planners can easily handle under-defined goals as we have in our setting---we define a goal as a grasp pose for the goal object while the planning dimension includes the DoFs of the robot as well as the time dimension.

\textbf{State space and graph construction.}
We define a state $s$ as a pair $(q,t)$ where $q = (\theta_1, ..., \theta_n)$ is a configuration represented by the joint angles for an $n$-DOF robot arm (in our setting $n=7$) and $t$ is the time associated with $q$.
Given a state $s$ we define two types of motion primitives which are short kinodynamically feasible motions that~$\calR$ can execute.
%

The first type of motion primitives are predefined primitives. These are small individual joint movements in either direction as well as \emph{wait} actions.
For each motion primitive, we compute its duration by using a nominal constant velocity profile for the  joint that is moved.
%

The second type of primitives are dynamic primitives. They are generated by the search only at the states that represent the arm configurations where the end effector is close to the object. These primitives correspond to the actual grasping of the object while it is moving.
The dynamic primitives are generated by using a Jacobian pseudo inverse-based control law similar to what~\cite{menon2014motion} used. 
The velocity of the end effector is computed such that the end-effector minimizes the distance to the grasp pose. Once the gripper encloses the object, it moves along with the object until the gripper is closed.

\textbf{Motion planner.}
The states and the transitions implicitly define a graph $\calG = (S,E)$ where $S$ is the set of all states and~$E$ is the set of all transitions defined by the motion primitives. We use Weighted A* (wA*)~\cite{pohl1970heuristic} to find a path in $\calG$ from a given state~$s$ to a goal $g$. 
wA* is a suboptimal heuristic search algorithm that allows a tradeoff between optimality and greediness by inflating the heuristic function by a given weight~$w$. 
The search is guided by an efficient and fast-to-compute heuristic function which in our case has two components.
The first component drives the search to intercept the object at the right time and 
the second component guides the search to correct the orientation of the end effector as it approaches the object. 
Mathematically, our heuristic function is given by
$$
 h(s,g) = \max (\lambda \cdot t(s,g), \textsc{AngleDiff}(s,g)).
$$
Here, $t(s,g)$ is the expected time to intercept the object which can be analytically computed from the velocities and positions of the target object and the end-effector and \textsc{AngleDiff}($s,g$) gives the magnitude of angular difference between the end-effector's current pose and target pose. The coefficient $\lambda$ is used as a weighting factor.

\subsubsection{Planning with Experience Reuse}
We now show how previously-computed paths which we named as root paths can be reused as experiences in our framework. Given a heuristic function $h$ we define for a root path $\Pi$ and a goal $g \in \Gfull$ the \emph{shortcut} state $\Ssc (\Pi,g)$ as the  state that is closest to~$g$ with respect~$h$.
Namely,
$$
\Ssc (\Pi,g) := \argmin\limits_{s_i \in \Pi} h(s_i, g).
$$
Now, when searching for a path to a goal $g \in \Gfull$ using root path $\Pi$ as an experience, we add $\Ssc (\Pi,g)$ as a successor for any state along~$\Pi$
(subject to the constraint that the path along $\Pi$ to \Ssc is collision free). In this manner we reuse previous experience to quickly reach a state close to the $g$.

\ignore{
\subsubsection{Planning using experiences}
We now show how \emph{experience graphs}~\cite{PCCL12} can be used in our framework.
Roughly speaking, experience graphs allow a planner  to accelerate its planning efforts whenever possible by using previously-computed paths. The planner gracefully degenerates to planning from scratch if no previous planning experiences can be reused.
The key idea is to bias the search efforts, using a specially-constructed heuristic function (called the ``E-graph heuristic''), towards finding a way to get onto the previously-computed paths and to remain on them rather than explore new regions as much as possible. 

In our setting, we use a simplified version of the aforementioned approach which is faster to compute.
The key insight is that in our setting, we always start at \Shome which is the first state on all root paths. Thus, we only need to bias the search to stay on a root path (and we don't need to bias the search efforts to get onto the previously-computed paths).
To this end, given a heuristic function $h$ we define for each root path $\Pi$ and each goal $g \in \Gfull$ the \emph{shortcut} state $\Ssc (\Pi,g)$ as the   state that is closest to~$g$ with respect~$h$.
Namely,
$$
\Ssc (\Pi,g) := \argmin\limits_{s_i \in \Pi} h(s_i, g).
$$
Now, when searching for a path to a goal $g \in \Gfull$ we
(i)~add $\Ssc (\Pi,g)$ as a successor for any state along $\Pi$
(subject to the constraint that the path along $\Pi$ to \Ssc is collision free)
and
(ii)~update our heuristic function to bias the search to use root paths. Specifically, for any state $s$ on the root path $\Pi$ the heuristic is given by
$$
h(s,g) = \min(h(s,\Ssc (\Pi,g)) + h(\Ssc (\Pi,g),g), \varepsilon \cdot h(s,g)).
$$
Here, $\varepsilon>1$ is a penalty term that biases the search to find a path via \Ssc.
}

\subsection{Algorithmic details}
We are finally ready to describe our algorithm describing first the preprocessing phase and then  the query phase.

\subsubsection{Preprocessing}
\begin{figure*}[t]
    \centering
    \begin{subfigure}{.225\textwidth}
        \includegraphics[width=\textwidth]{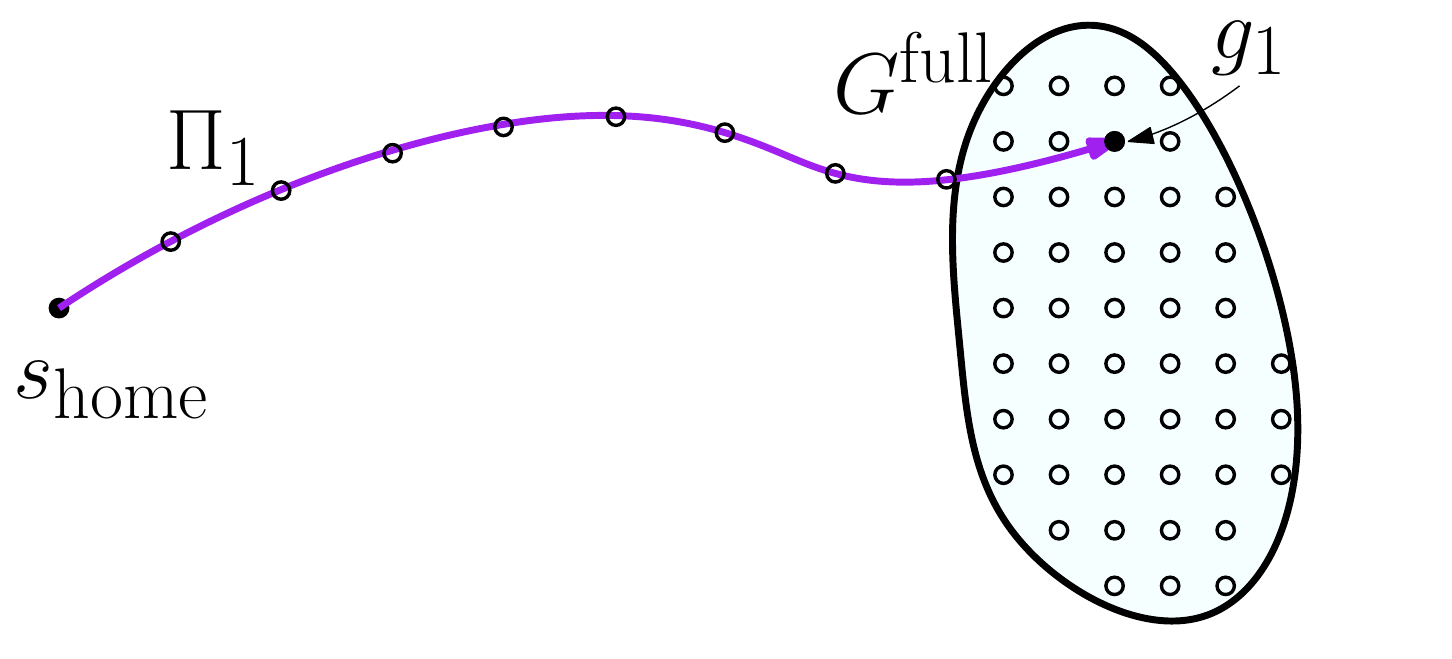}
        \caption{}
        \label{fig:crp1}
    \end{subfigure}
    \hspace{4mm}
    \begin{subfigure}{0.225\textwidth}
        \includegraphics[width=\textwidth]{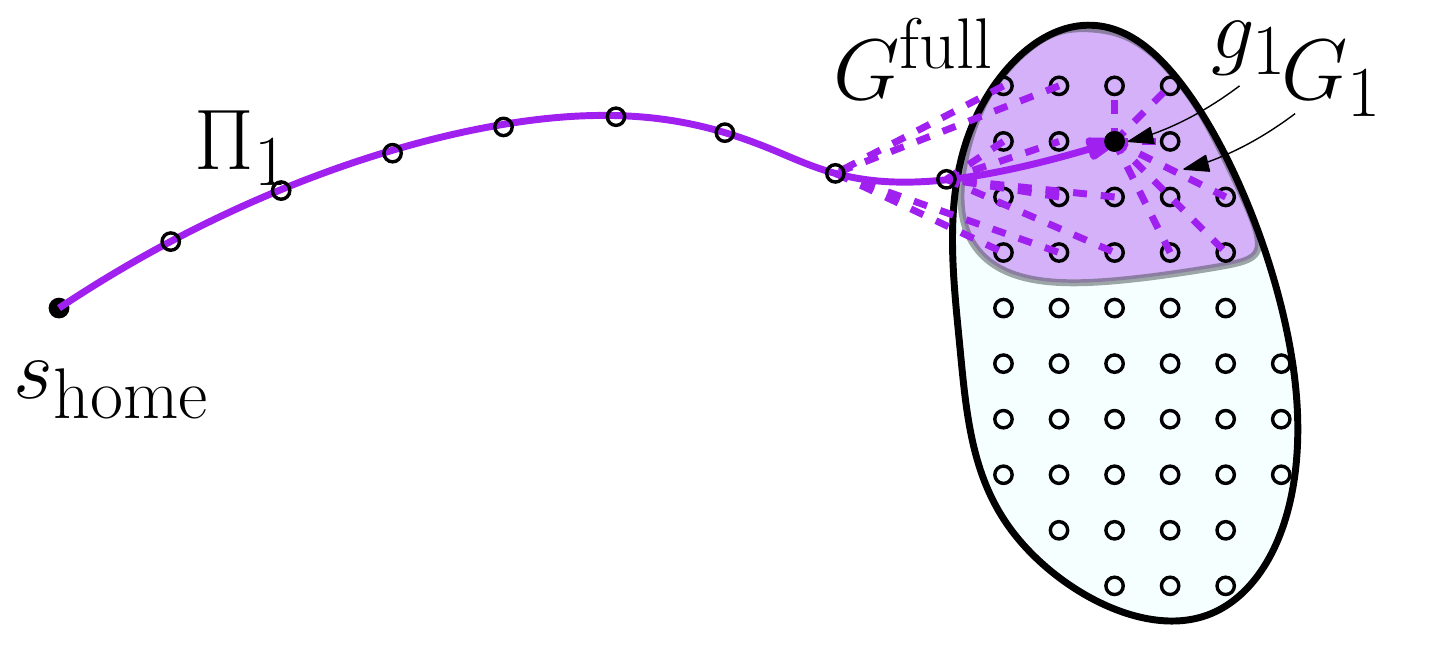}
        \caption{}
        \label{fig:crp2}
    \end{subfigure}
    \hspace{4mm}
    \begin{subfigure}{0.225\textwidth}
        \includegraphics[width=\textwidth]{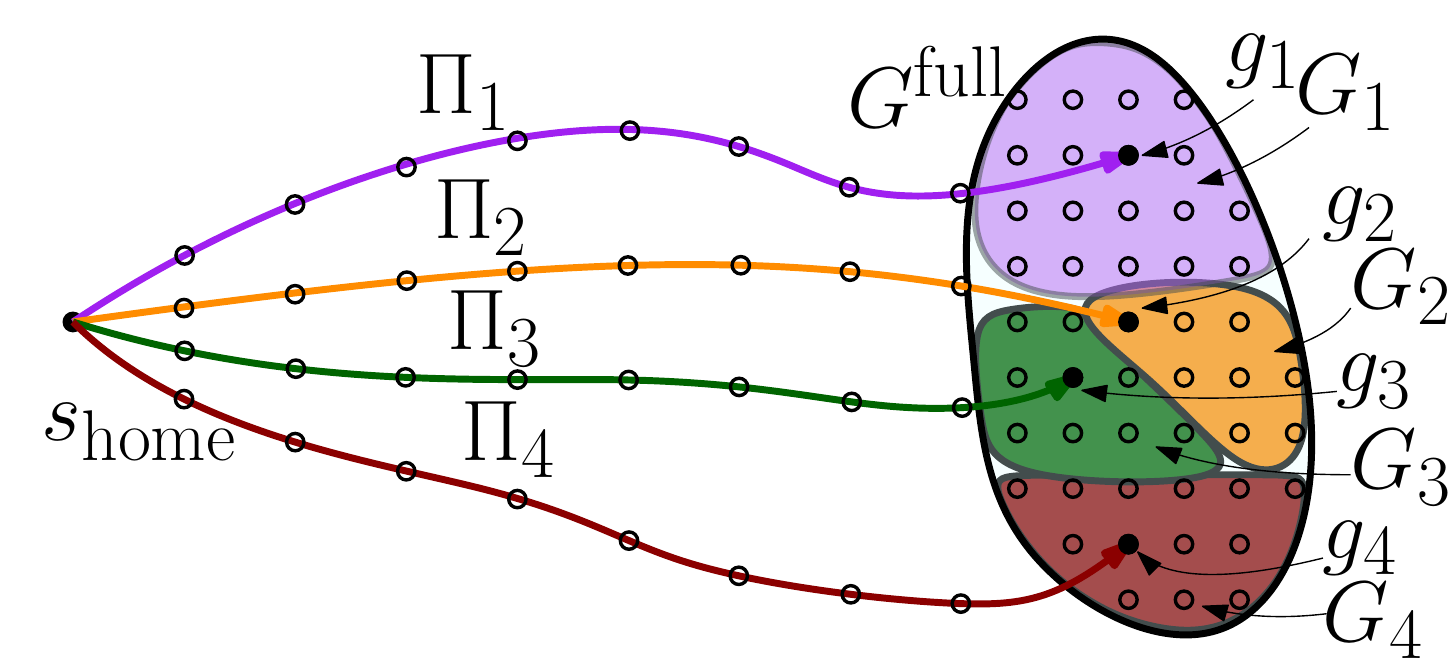}
        \caption{}
        \label{fig:crp3}
    \end{subfigure}
    \caption{\CaptionTextSize
    First step of the preprocessing stage.
    (\subref{fig:crp1})~A goal $g_1$ is sampled and the root path $\Pi_1$ is computed between \Shome and $g_1$.
    (\subref{fig:crp2})~The set $G_1 \subset \Gfull$ of all states that can use $\Pi_1$ as an experience is computed and associated with $\Pi_1$.
    (\subref{fig:crp3})~The goal region covered by four root paths from \Shome after the first step of the preprocessing stage terminates.
    }
    \label{fig:crp}
\end{figure*}

Our preprocessing stage starts by sampling a goal~$g_1 \in \Gfull$ and computing a root path~$\Pi_1$ from~$\Shome$ to~$g_1$. We then associate with~$\Pi_1$ the set of goals~$G_1 \subset \Gfull$ such that~$\Pi_1$ can be used as an experience in reaching any $g_j \in G_1$ within~\Tbound\footnote{In practice, to account for other query phase operations, such as hash table lookups etc., a slightly smaller time than \Tbound is provided to the experience-based planner, to ensure that the overall query time is bounded by \Tbound.}.
Thus, all goals in~$G_1$ are covered by~\Shome.
We then repeat this process but instead of sampling  a goal from \Gfull, we sample from $\Gfull \setminus G_1$, thereby removing covered goals from \Gfull in every iteration.
At the end of this step, we obtain a set of root paths. 
Each root path~$\Pi_i$ is associated with a goal set $G_i \subseteq \Gfull$ such that 
(i)~$\Pi_i$ can be used as an experience for planning to any $g_j \in G_i$ in~\Tbound and 
(ii)~$\bigcup_i G_i = \textsc{Reachable}(\Shome, \Gfull)$ (i.e all reachable goals for \Shome in \Gfull).
Alg.~\ref{alg:step1} details this step (when called with arguments ($\Shome,\Gfull$)). It also returns a set of unreachable goals that are left uncovered.
The process is illustrated in Fig.~\ref{fig:crp}.

\begin{algorithm}[t]
\caption{Plan Root Paths}
\label{alg:step1}
    \AlgFontSize
\begin{algorithmic}[1]
\Procedure{PlanRootPaths}{$s_{\textrm{start}}, \Guncov$}
\State $\Psi_{\Sstart} \leftarrow \emptyset$   \Comment{a list of pairs ($\Pi_i, G_i)$}
\State $\Guncov_{\Sstart} \leftarrow \emptyset$; \hspace{3mm}
       $i = 0$
\While{$\Guncov \neq \emptyset$}
        \Comment{until all reachable goals are covered}
    \State $g_i \leftarrow$\textsc{SampleGoal}($\Guncov$)
    \State $\Guncov \leftarrow \Guncov \setminus \{g_i\}$
    
    \If {$\Pi_i \leftarrow$ \textsc{PlanRootPath}($s_{\textrm{start}}, g_i$)} \Comment{planner succeeded}
        \State $G_i \leftarrow \{ g_i \}$   \Comment{goals reachable}
        \For {\textbf{each} $g_j \in \Guncov$}
            \If {$\pi_j \leftarrow$\textsc{PlanPathWithExperience}($s_{\textrm{start}},g_j,\Pi_i$)}
                \State $G_i \leftarrow G_i \cup \{g_j\}$
                \State $\Guncov \leftarrow \Guncov \setminus \{g_j\}$
            \EndIf
        \EndFor
        \State $\Psi_{\Sstart} \leftarrow \Psi_{\Sstart} \cup \{ (\Pi_i, G_i)\}$; \hspace{3mm}
        $i \leftarrow i + 1$
        
    \Else
        \State $\Guncov_{\Sstart} \leftarrow \Guncov_{\Sstart} \cup \{g_i\}$ \Comment{goals unreachable}
    \EndIf
\EndWhile
\State \textbf{return} $\Psi_{\Sstart}, \Guncov_{\Sstart}$
\EndProcedure
\end{algorithmic}
\end{algorithm}

%
\begin{algorithm}[t]
\caption{Preprocess}\label{alg:preprocess}
    \AlgFontSize
\begin{algorithmic}[1]
\Procedure{TryLatching}{$s,\Psi_{\Shome}\Guncov,\Gcov$}
        \For {\textbf{each} $(\Pi_i, G_i) \in \Psi_{\Shome}$}
        \label{alg:preprocess:line:latch1a}
            \If{\textsc{CanLatch}($s,\Pi_i$)}
                \State $\Guncov \leftarrow \Guncov \setminus G_i$
                \State $\Gcov \leftarrow \Gcov \cup G_i$
                \label{alg:preprocess:line:latch1b}
            \EndIf
        \EndFor
\State \textbf{return} $\Guncov, \Gcov$
\vspace{2mm}
\EndProcedure
\Procedure{Preprocess}{$\Sstart,\Guncov,\Gcov$}
\State $\Psi_{\Sstart}, G^{\textrm{uncov}}_{\Sstart} \leftarrow$ \textsc{PlanRootPaths}($\Sstart,\Guncov$)
{\color{blue}{ \State \textbf{if} $\Sstart = \Shome$ \textbf{then} $\Psi_{\Shome} = \Psi_{\Sstart}$}}


\State $G^{\textrm{cov}}_{\Sstart} \leftarrow 
    \Gcov \cup (\Guncov \setminus G^{\textrm{uncov}}_{\Sstart})$
\If{$t(s_{\textrm{start}}) \leq \Trc$}

\For {\textbf{each} $(\Pi_i, G_i) \in \Psi_{\Sstart}$} \label{loop1}
    \State $G_i^{\textrm{cov}} \leftarrow G_i$;
            \hspace{2mm}
           $G_i^{\textrm{uncov}} \leftarrow G^{\textrm{cov}}_{\Sstart} \setminus G_i$; \label{uncov_init}
            \hspace{2mm}

    \For {\textbf{each} $s \in \Pi_i$ ({from last to first})} \Comment{states up to $\Trc$} \label{loop2}
\textcolor{blue}{
        \State $\Guncov_i,\Gcov_i \leftarrow$ \textsc{TryLatching}($s,\Psi_{\Shome},\Guncov_i,\Gcov_i$)
        \If{$G_i^{\textrm{uncov}} = \emptyset$}
            \State \textbf{break}
        \EndIf
} 
        \State $G_i^{\textrm{uncov}},G_i^{\textrm{cov}} \leftarrow$ \textsc{Preprocess}($s,G_i^{\textrm{uncov}},G_i^{\textrm{cov}}$)    \label{recursion}
        \If{$G_i^{\textrm{uncov}} = \emptyset$} \label{terminate}
            \State \textbf{break}
        \EndIf
 
    \EndFor
\EndFor
\EndIf
\State \textbf{return} $G^{\textrm{uncov}}_{\Sstart}, G^{\textrm{cov}}_{\Sstart}$
\EndProcedure
\end{algorithmic}
\end{algorithm}

%


So far we explained the algorithm for one-time planning when the robot is at \Shome ($t = 0$); we now need to allow for efficient replanning for any state $s$ between $t = 0$ to $\Trc$. In order to do so, we iterate through all the states on these root paths and add additional root paths so that these states also cover their respective reachable goals. This has to be done recursively since newly-added paths generate new states which the robot may have to replan from. The complete process is detailed in Alg.~\ref{alg:preprocess}.
The \textsc{Preprocess} procedure takes in a state \Sstart, the goal region that it has to cover \Guncov and region that it already has covered \Gcov. Initially \textsc{Preprocess} is called with arguments ($\Shome, \Gfull, \emptyset$) and it runs recursively until no state is left with uncovered reachable goals.
%

At a high level, the algorithm iterates through each root path $\Pi_i$ (loop at line~\ref{loop1}) and for each state $s \in \Pi_i$ (loop at line~\ref{loop2}) the algorithm calls itself recursively (line~\ref{recursion}). The algorithm terminates when all states cover their reachable goals. The pseudocode in blue constitute an additional optimization step which we call ``latching" and is explained later in Sec.~\ref{latching}.

In order to minimize the required computation, the algorithm leverages two key observations:

\begin{enumerate}[label={\textbf{O\arabic*}},leftmargin=0.75cm]
    \item \label{obs:1}
    If a goal is not reachable from a state $s \in \Pi$, it is not reachable from all the states after it on $\Pi$.
    \item \label{obs:2}
    If a goal is covered by a state $s \in \Pi$, it is also covered by all states preceding it on $\Pi$.
\end{enumerate}

We use \ref{obs:1} to initialize the uncovered set for any state; instead of attempting to cover the entire \Gfull for each replanable state~$s$, the algorithm only attempts to cover the goals that could be reachable from $s$, thereby saving computation.
%
\ref{obs:2} is used by iterating backwards on each root path (loop at line~\ref{loop2}) and for each state on the root path only considering the goals that are left uncovered by the states that appear on the path after it.

Specifically,~\ref{obs:2} is used to have a single set of uncovered goals~$\Guncov_i$ for all states that appear on $\Pi_i$ instead of having individual sets for each state and the goals that each $s \in \Pi_i$ covers in every iteration of loop~\ref{loop2} are removed from~$\Guncov_i$.
\ref{obs:1} is used to initialize $\Guncov_i$ (in line~\ref{uncov_init}). Namely, it is initialized not by the entire \Gfull but by the set of goals covered by \Sstart. $G_i$ is excluded since it is already covered via $\Pi_i$. The iteration completes either when all goals in $\Guncov_i$ are covered (line~\ref{terminate}) or the loop backtracks to \Sstart.
The process is illustrated in Fig.~\ref{fig:pl_no_latching}

Thus, as the outcome of the preprocessing stage a map $\calM: S \times \Gfull \rightarrow \{ \Pi_1, \Pi_2, \ldots \}$ is constructed that can be looked up to find which root path can be used as an experience to plan to a goal $g$ from a state $s$ within \Tbound.

%
%

%

\ignore{
\ref{obs:1} implies that if we can cover a goal $g'$  by some state~$s$ along $\Pi_g$ (with $g \neq g'$), then we cover $g'$ by all states $\Pi$ that occur before~$s$.
\ref{obs:2} implies that if we can compute a path connecting one root path to some other root path, a process we term as ``latching'' on to the new path, then the new root path can be used to reach all its associated goals.

We are now ready to describe the second step of our preprocessing stage.
For every root path $\Pi_i$ we look at the last replanning state $s_{\Pi_i, \Trc}$ (namely, the state that is $t=\Trc$ time from \Shome). For every other root path $\Pi_j$, we test if the motion (a dynamically generated primitive) connecting $s_{\Pi_i, \Trc}$ to $s_{\Pi_j, \Trc + \delta_t}$ (the state on $\Pi_j$ that is $\Trc+\delta_t$ away from \Shome) is valid (i.e. is collision free and reachable in time). 
If this is the case then all goals in $G_j$ are covered by all replanning states along~$\Pi_i$
See Alg.~\ref{alg:preprocess} lines~\ref{alg:preprocess:line:latch1a}-\ref{alg:preprocess:line:latch1a}.

Let $\Guncov(\Trc)$ be all the states that are still uncovered after the above process. We recursively apply our algorithm to the setting where the start state is $s_{\Pi_i, \Trc}$ and the goal region is~$\Guncov(\Trc)$.
If all states are covered after this step, we terminate. 
If not, let $\Guncovp(\Trc)$ be all the states still uncovered.
We consider the state $s_{\Pi_i, \Trc-\delta_t}$ (the state on $\Pi_j$ that is $\Trc-\delta_t$ away from \Shome) and recursively run our algorithm where the start state is $s_{\Pi_i, \Trc-\delta_t}$ and the goal region is $\Guncovp(\Trc)$.
This process is repeated until either all states are covered or we backtracked to $s_{\Pi_i, 0}$ i.e the first state on $s_{\Pi_i}$.
See Fig.~\ref{fig:pl} and  Alg.~\ref{alg:all} and~\ref{alg:preprocess}, respectively.
}

\begin{figure*}[t]
    \centering
    \begin{subfigure}{.225\textwidth}
        \includegraphics[width=\textwidth]{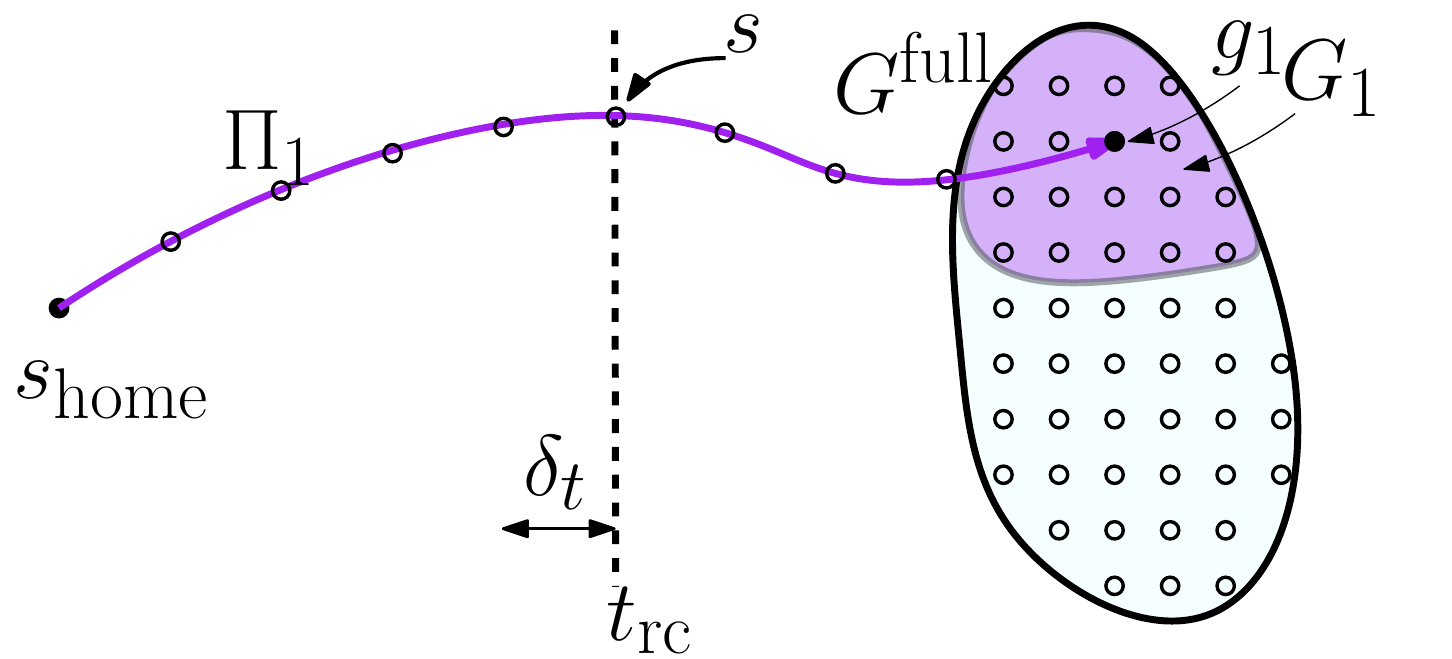}
        \caption{}
        \label{fig:pre1}
    \end{subfigure}
    \begin{subfigure}{0.225\textwidth}
        \includegraphics[width=\textwidth]{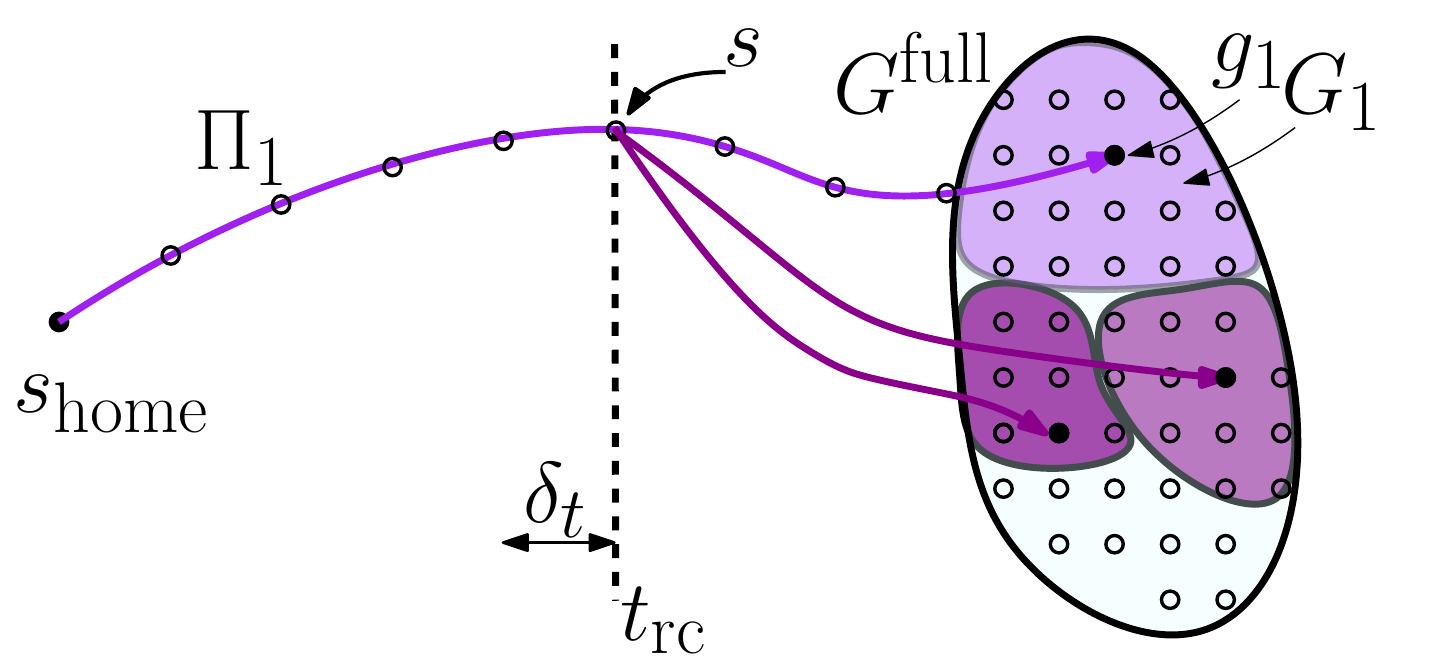}
        \caption{}
        \label{fig:pre2}
    \end{subfigure} 
    \begin{subfigure}{0.225\textwidth}
        \includegraphics[width=\textwidth]{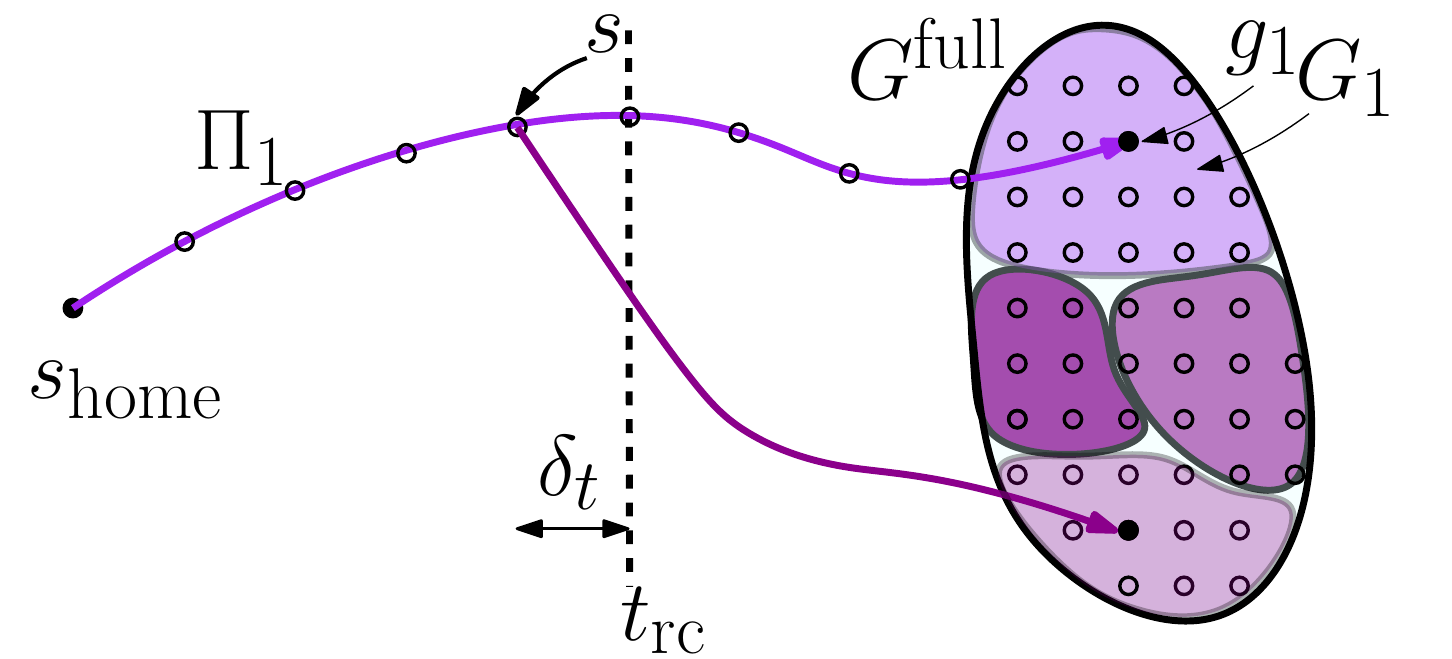}
        \caption{}
        \label{fig:pre3}
    \end{subfigure}
    \hspace{8mm}
    \begin{subfigure}{.225\textwidth}
        \includegraphics[width=\textwidth]{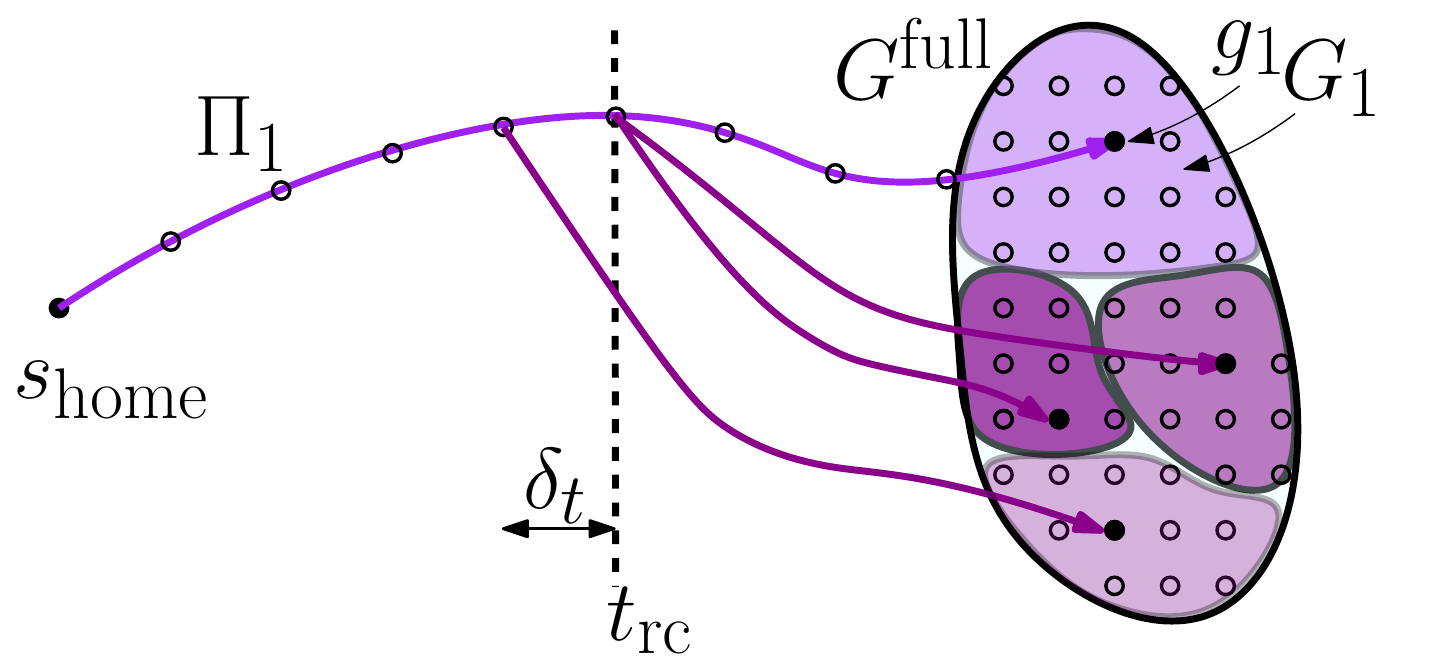}
        \caption{}
        \label{fig:pre4}
    \end{subfigure}
    \caption{\CaptionTextSize
    Preprocess loop for $\Pi_1$ without latching.
    (\subref{fig:pre1})~Initially the state $s$ covers $G_1$ via $\Pi_1$. 
    (\subref{fig:pre2})~New root paths are computed from $s$ to cover remaining uncovered region.
    (\subref{fig:pre3})~This process is repeated by backtracking along the root path.
    (\subref{fig:pre4})~Outcome of a preprocessing step for one path: \Gfull is covered either by using $\Pi_1$ as an experience or by 
    using newly-computed root paths. 
    }
    \label{fig:pl_no_latching}
\end{figure*}

\begin{figure*}[t]
    \centering
    \begin{subfigure}{.225\textwidth}
        \includegraphics[width=\textwidth]{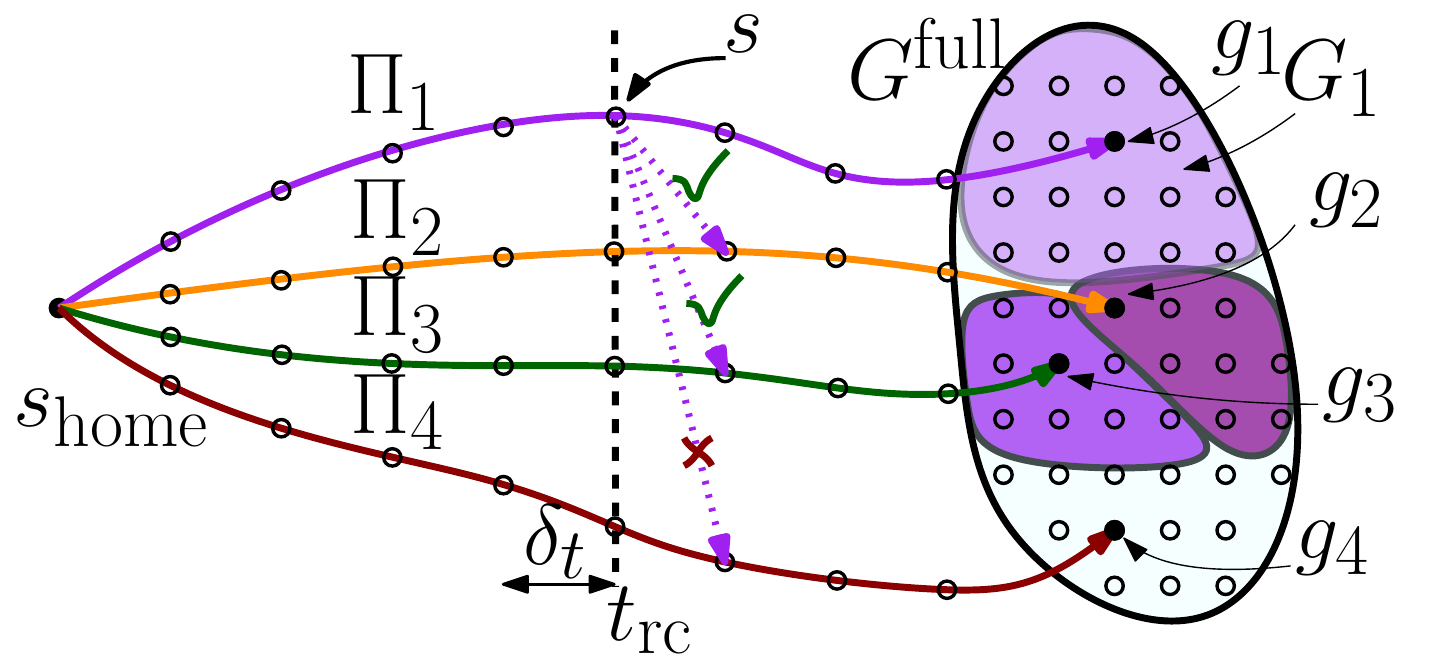}
        \caption{}
        \label{fig:pl1}
    \end{subfigure}
    \begin{subfigure}{0.225\textwidth}
        \includegraphics[width=\textwidth]{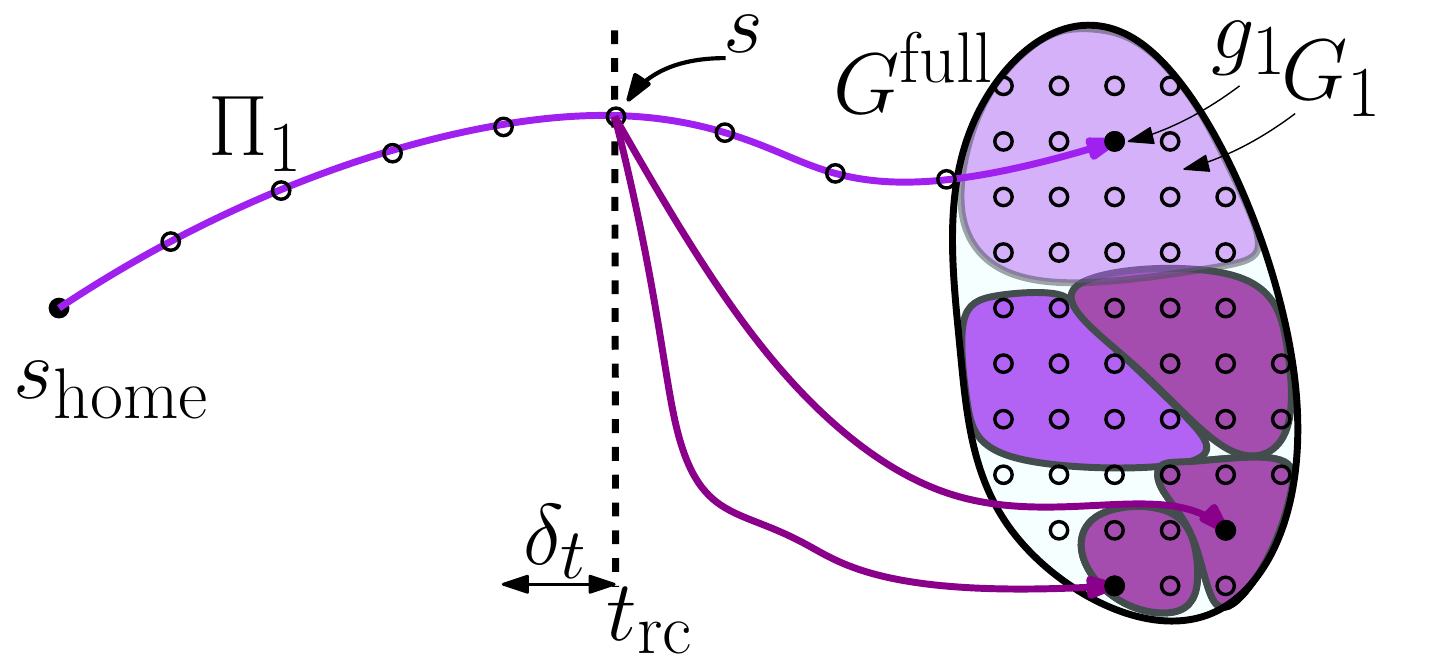}
        \caption{}
        \label{fig:pl2}
    \end{subfigure} 
    \begin{subfigure}{0.225\textwidth}
        \includegraphics[width=\textwidth]{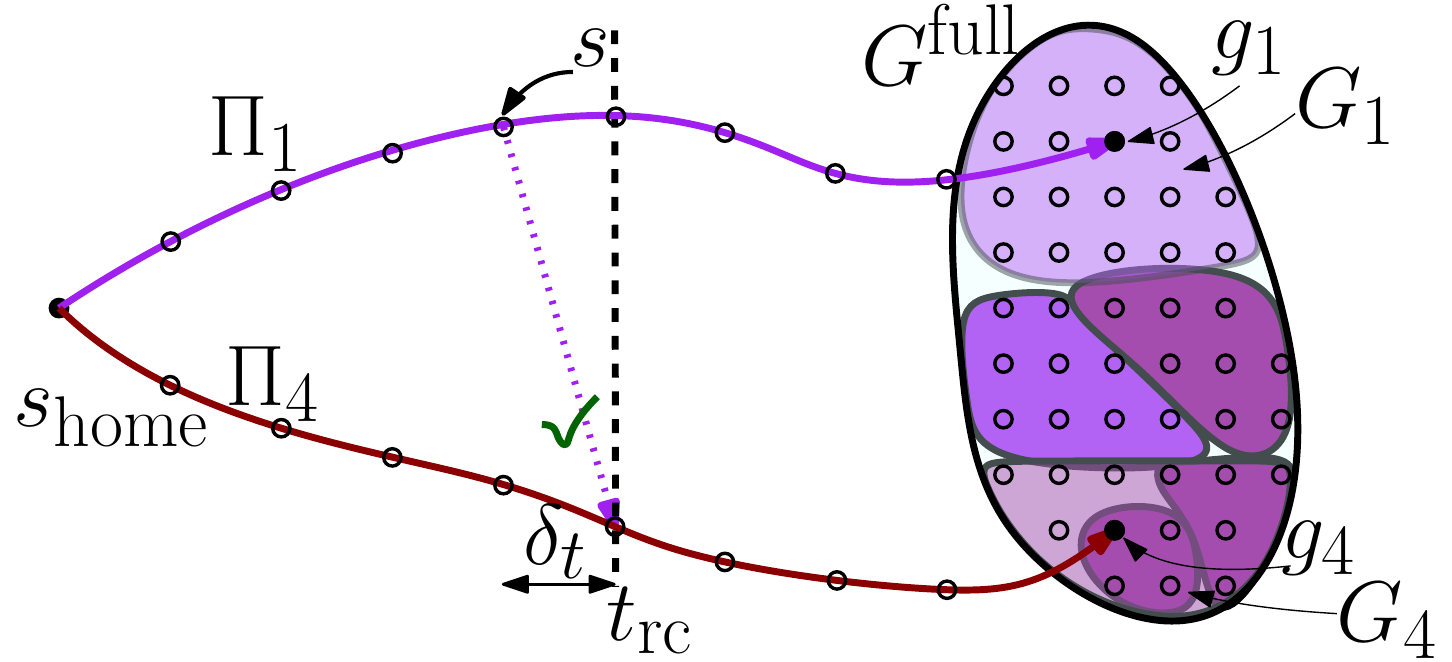}
        \caption{}
        \label{fig:pl3}
    \end{subfigure}
    \hspace{8mm}
    \begin{subfigure}{.225\textwidth}
        \includegraphics[width=\textwidth]{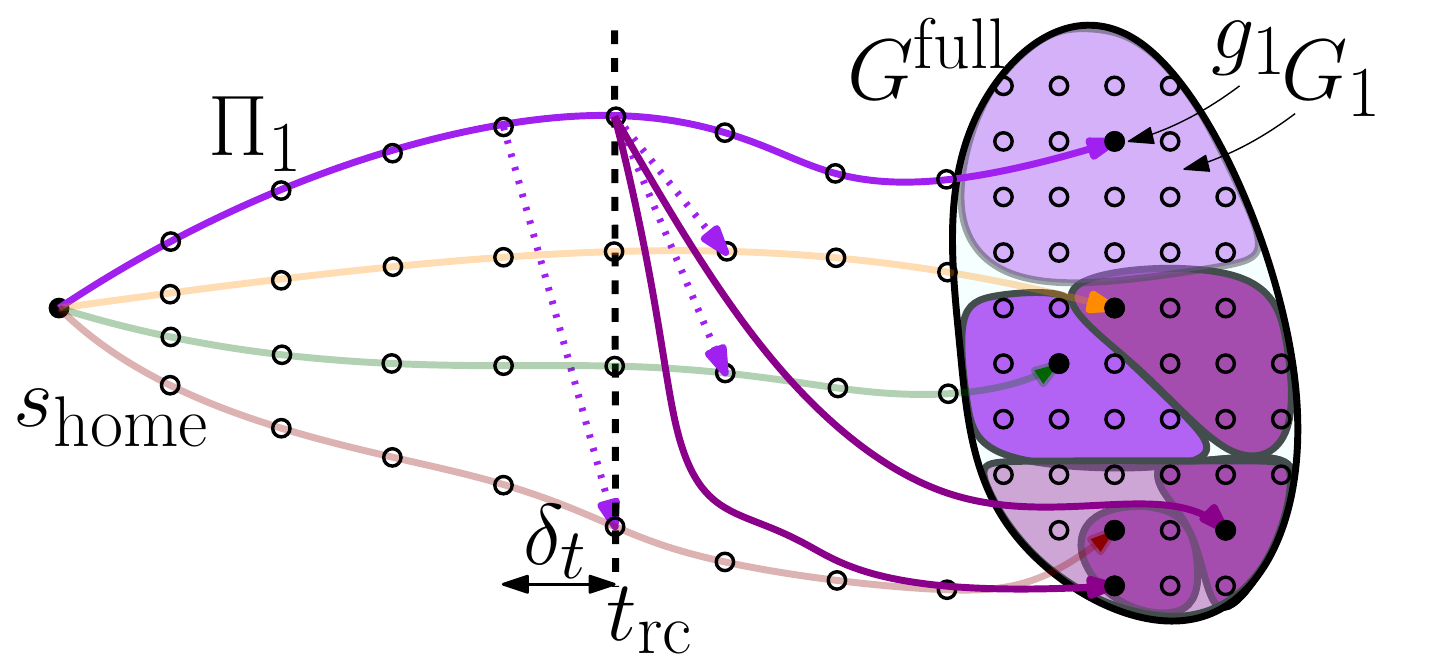}
        \caption{}
        \label{fig:pl4}
    \end{subfigure}
    \caption{\CaptionTextSize
    Preprocess loop for $\Pi_1$ with latching.
    (\subref{fig:pl1})~The algorithm starts by trying to latch on to every other root path; for successful latches, the corresponding goals are removed from uncovered region.
    (\subref{fig:pl2})~New root paths are computed from $s$ to cover remaining uncovered region.
    (\subref{fig:pl3})~This process is repeated by backtracking along the root path.
    (\subref{fig:pl4})~Outcome of a preprocessing step: \Gfull is covered either by using $\Pi_1$ as an experience, 
    latching on to $\Pi_2,\Pi_3$ or  $\Pi_4$ (at different time steps)
    or by 
    using newly-computed root paths. 
    }
    \label{fig:pl_latching}
\end{figure*}

\subsubsection{Query}
Alg.~\ref{alg:query} describes the query phase of our algorithm. Again, the lines in blue correspond to the blue pseudocode in Alg.~\ref{alg:preprocess} for the additional optimization step which is explained in Sec.~\ref{latching}.
Assume that the robot was at a state~$s_{\textrm{curr}}$ while executing a path~$\pi_{\rm curr}$ when it receives a pose update~$g$ from the perception system. Alg.~\ref{alg:query} will be called for a state \Sstart that is \Tbound ahead of~$s_{\textrm{curr}}$ along $\pi_{\rm curr}$, allowing the algorithm to return a plan before the robot reaches \Sstart.

Alg.~\ref{alg:preprocess} assures that there exists one state on~$\pi_{\rm curr}$ between \Sstart and the state at \Trc that covers $g$. Therefore, we iterate over each~$s \in \pi_{\rm curr}$ backwards (similar to Alg.~\ref{alg:preprocess}) between \Sstart and the state at \Trc and find the one that covers $g$ by quering~$\calM$. Once found, we use the corresponding root path $\Pi_{\rm next}$ as an experience to plan the path $\pi_{\rm next}$ from $s$ to $g$. Finally the paths $\pi_{\rm curr}$ and $\pi_{\rm next}$ are merged together with $s$ being the transitioning state to return the final path $\pi$. 

\ignore{
In the query stage, given an initial estimation $g_{\rm init}$ of the goal pose from $\calP$, our algorithm queries $\calM$ to obtain the root path $\Pi$ from $\Shome$ to $g_{\rm init}$.
It then plans a path using $\Pi$ as an experience and starts executing this path.
Now, assume that the~$\calR$ is executing path $\Pi_{\rm curr}$ and a new estimation $g_{\rm new}$ of the goal pose is provided by $\calP$.
We consider the state $s_{\Pi_{\rm curr}, \Trc}$ along the path $\Pi_{\rm curr}$ at time $\Trc$ and test if
(i)~we can reuse the root path that $\calR$ is currently executing as an experience to reach the new goal
(Alg.~\ref{alg:query}, lines~\ref{alg:query:line:c1a}-\ref{alg:query:line:c1b}),
(ii)~there is a root path that can be used as an experience to reach $g_{\rm new}$ starting at $s_{\Pi_{\rm curr}, \Trc}$ 
(Alg.~\ref{alg:query}, lines~\ref{alg:query:line:c2a}-\ref{alg:query:line:c2b})
or if 
(iii)~there is a root path starting at \Shome associated with~$g_{\rm new}$ that we can latch on to.
If the former holds, we run our experience-based planner to obtain a path $\Pi(s_{\Pi_{\rm curr}}, \Trc, g_{\rm new})$ starting at $s_{\Pi_{\rm curr}}$ and ending at $g_{\rm new}$. We then execute $\Pi_{\rm curr}$ until $s_{\Pi_{\rm curr}, \Trc}$ and then continue by executing $\Pi(s_{\Pi_{\rm curr}}, \Trc, g_{\rm new})$.
If the latter holds, we run our experience-based planner to obtain a path $\Pi(s', \Trc + \delta_t), g_{\rm new})$ starting at $s'$, the state on the path we latched on to and ending at $g_{\rm new}$. We then execute $\Pi_{\rm curr}$ until $s_{\Pi_{\rm curr}, \Trc}$ and then continue by executing the motion that latches on to $s'$ and finally executing  $\Pi(s', \Trc + \delta_t, g_{\rm new})$.
See Alg.~\ref{alg:query}, lines~\ref{alg:query:line:c3a}-\ref{alg:query:line:c3b}.
If neither hold, we consider the state $s_{\Pi_{\rm curr}, \Trc-\delta_t}$ and repeat this process.
For pseudocode describing our approach see~\ref{alg:query}.
}

\begin{algorithm}[t]
\caption{Query}\label{alg:query}  
    \AlgFontSize
\hspace*{\algorithmicindent} \textbf{Inputs:} $\calM, \Shome$
\begin{algorithmic}[1]
\Procedure{PlanPathByLatching}{$\Sstart,g$}
\If{$\Pi_{\textrm{home}} \leftarrow \calM (\Shome,g)$ exists} \Comment{lookup root path}
        \If{\textsc{CanLatch}($s,\Pi_{\textrm{home}}$)}
            \State $\pi_{\textrm{home}} \leftarrow$\textsc{PlanPathWithExperience}($s_{\textrm{start}},g,\Pi_{\textrm{home}}$)
            \State $\pi \leftarrow$ \textsc{MergePathsByLatching}($\pi_{\textrm{curr}},\pi_{\textrm{home}}, s$)
            \State \textbf{return} $\pi$
            \label{alg:query:line:c3b}
        \EndIf
    \EndIf
\State \textbf{return failure}
\EndProcedure
\vspace{2mm}
\Procedure{Query}{$g, \pi_{\textrm{curr}},s_{\textrm{start}}$}
    \For {\textbf{each} $s \in \pi_{\textrm{curr}}$ (from last to $\Sstart$)} \Comment{states up to $\Trc$}
        \label{alg:query:line:c2a}
         
    \If{$\Pi_{\textrm{next}} \leftarrow$ $\calM(s,g)$ exists} \Comment{lookup root path}
        \State $\pi_{\textrm{next}} \leftarrow$\textsc{PlanPathWithExperience}($s_{\textrm{start}},g,\Pi_{\textrm{next}}$)
        \State $\pi \leftarrow$ \textsc{MergePaths}($\pi_{\textrm{curr}},\pi_{\textrm{next}},s$)
        \State \textbf{return} $\pi$
        \label{alg:query:line:c2b}
    \EndIf
%
%
    \textcolor{blue}{
    \If {$\pi \leftarrow $\textsc{PlanPathByLatching}($\Sstart,g$) \textbf{successful}}
        \State \textbf{return} $\pi$
    \EndIf
    }

\EndFor
    \State \textbf{return failure} \Comment{goal is not reachable}
\EndProcedure
\end{algorithmic}
\end{algorithm}

%

\subsubsection{Latching: Reusing Root Paths}
\label{latching}
We introduce an additional step called ``Latching" to minimize the number of root paths computed in Alg.~\ref{alg:preprocess}. With latching, the algorithm tries to reuse previously-computed root paths as much as possible using special motion primitives that allow transitions from one root path to another.
The primitive is computed from a state $s \in \Pi_i$ to $s' \in \Pi_j$ such that $t(s') = t(s) + \delta_t$ by simple linear interpolation while ensuring that kinodynamic constraints of the robot are satisfied. Specifically, given the nominal joint velocities of the robot, if $s'$ can be reached from $s$ in time $\delta_t$ while respecting the kinematic and collision constraints, then the transition is allowed.

In Alg.~\ref{alg:preprocess}, before calling the \textsc{Preprocess} procedure for a state, the algorithm removes the set of goals that can be covered via latching, thereby reducing the number of goals that need to be covered by the \textsc{Preprocess} procedure. Correspondingly, in Alg.~\ref{alg:query}, an additional procedure is called to check if the path can be found via latching. These additions in the two pseudocodes are shown in blue. An iteration of the complete algorithm with latching is illustrated in Fig~\ref{fig:pl_latching}.

\subsection{Theoretical guarantees}


\begin{lemma}[Completeness]
For a robot state~$s$ and a goal~$g$, if~$g$ is \emph{reachable} from~$s$ and~$t(s) \leq t_{rc}$, the algorithm is guaranteed to find a path from~$s$ to~$g$.
\end{lemma}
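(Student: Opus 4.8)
The plan is to argue by strong induction on the recursion depth of \textsc{Preprocess}, showing that at termination every reachable goal from every replanable state lies in that state's covered set, and then to observe that the query phase correctly exploits this. First I would fix the state $s$ with $t(s) \le \Trc$ and the reachable goal $g$. Observe that $s$ either equals \Shome, or lies on some root path $\Pi_i$ that was constructed (directly or recursively) during preprocessing; in the latter case $s$ was visited by the inner loop at line~\ref{loop2} when \textsc{Preprocess} was invoked on the start state of $\Pi_i$. The base of the induction is the call \textsc{Preprocess}$(\Shome,\Gfull,\emptyset)$: inside it, \textsc{PlanRootPaths} runs its \texttt{while} loop until $\Guncov$ contains only goals for which \textsc{PlanRootPath} failed, and by Definition~1 (reachability = a path exists and is computable in finite time) those are precisely the unreachable goals. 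Hence every reachable $g\in\Gfull$ ends up in some $G_i$, i.e. is covered by \Shome via $\Pi_i$ used as an experience. This settles the case $s=\Shome$.

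For the inductive step, consider a replanable state $s\in\Pi_i$ reached in the loop at line~\ref{loop2}, with the set $G_i^{\textrm{uncov}}$ of goals not yet covered by later states on $\Pi_i$. The key is that $G_i^{\textrm{uncov}}$ is initialized (line~\ref{uncov_init}) to $G^{\textrm{cov}}_{\Sstart}\setminus G_i$, which by observation~\ref{obs:1} is a superset of the goals reachable from $s$ but not already covered through $\Pi_i$ itself — so no reachable goal is erroneously dropped. The recursive call \textsc{Preprocess}$(s,G_i^{\textrm{uncov}},\ldots)$ then runs \textsc{PlanRootPaths}$(s,G_i^{\textrm{uncov}})$, which again terminates only when every \emph{reachable}-from-$s$ goal in $G_i^{\textrm{uncov}}$ has been placed in some new $G_j$; by the induction hypothesis applied to this strictly deeper call, all states on those newly created root paths (that are themselves before $\Trc$) also cover their reachable goals. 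Combining with observation~\ref{obs:2} (a goal covered by a state is covered by all earlier states on the same path), every reachable $g$ from $s$ is covered either by $\Pi_i$, by latching onto a root path from \Shome, or by one of the recursively computed root paths rooted at $s$ or at a later state of $\Pi_i$ — and the map $\calM$ records which. Termination of the recursion follows because each recursive call handles a strictly later time stamp (bounded by $\Trc$) with a discretization step $\delta_t$, and because each iteration of the \texttt{while} loop in \textsc{PlanRootPaths} removes at least one goal from a finite set; this guarantees the "computable in finite time" clause of reachability is not violated.

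Finally I would connect this to the statement. Given a query from state $s$ with $t(s)\le\Trc$ and reachable $g$: Alg.~\ref{alg:query} is invoked on a state \Sstart that is \Tbound ahead of the current state along $\pi_{\rm curr}$, and $\pi_{\rm curr}$ is a concatenation of (segments of) root paths produced by preprocessing, so the preprocessing invariant guarantees some $s'\in\pi_{\rm curr}$ between \Sstart and the $\Trc$-state has $\calM(s',g)\neq\varnothing$. The backward loop in \textsc{Query} finds such an $s'$, retrieves its root path $\Pi_{\textrm{next}}$, and \textsc{PlanPathWithExperience} returns a valid path from $s'$ to $g$ (the experience mechanism of Sec.~IV-C adds the shortcut state as a successor but never removes edges, so it is at least as complete as wA* planning from scratch, which finds a path since $g$ is reachable); merging with $\pi_{\rm curr}$ yields a path from $s$ to $g$.

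The main obstacle I expect is making the inductive invariant precise enough to be watertight: one must state exactly what "covered" means at intermediate recursion levels (covered relative to the sub-goal-region passed in, versus covered absolutely), and verify that observations~\ref{obs:1}–\ref{obs:2} are strong enough to ensure the union of the sub-regions actually equals the set of reachable goals — in particular that initializing $G_i^{\textrm{uncov}}$ with $G^{\textrm{cov}}_{\Sstart}\setminus G_i$ rather than all of \Gfull never discards a goal that is reachable from a later state $s\in\Pi_i$. This hinges on observation~\ref{obs:1} (monotone shrinking of reachable sets along a path), whose own justification — that a wait-free forward-only time-parameterized path cannot gain reachability — should be spelled out, since it is the linchpin of the whole memory-saving argument.
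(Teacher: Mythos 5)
Your proposal follows essentially the same two-part decomposition as the paper's own (sketch) proof: first, that the recursive preprocessing guarantees every reachable goal is covered by every replanable state (via \textsc{PlanRootPaths} terminating only when the uncovered set contains solely unreachable goals, plus observations~\ref{obs:1}--\ref{obs:2} and the backward iteration over each root path), and second, that the backward loop of Alg.~\ref{alg:query} is guaranteed to locate a covering state between \Sstart and the $\Trc$-state and hence return a path. Your version is simply a more explicit, induction-on-recursion-depth rendering of the paper's argument, and the caveats you flag (the precise meaning of ``covered'' at intermediate recursion levels, and the justification of observation~\ref{obs:1}) are exactly the points the paper's sketch leaves informal.
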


\begin{proof}[Proof (Sketch)]
In order to prove it we show that (1) if~$g$ is reachable from~$s$, it is \emph{covered} by~$s$ in Alg.~\ref{alg:preprocess} and (2) if~$g$ is covered by~$s$, Alg.~\ref{alg:query} is guaranteed to return a path.

Alg.~\ref{alg:preprocess} starts by computing a set of root paths from \Shome that ensures that it covers all of its reachable goals. It then iterates over all states on these paths and adds additional root paths ensuring that these states also cover their reachable goals. It does it recursively until no state before \Trc is left with uncovered goals. Therefore, it is ensured that any reachable $g$ is covered by $s$, provided that $t(s) \leq t_{rc}$. 
    
Alg.~\ref{alg:preprocess} covers~$g$ via at least one state between~$s$ and the state at~$t_{rc}$ (inclusively) (loop at line~\ref{loop2}).
In query phase, Alg.~\ref{alg:query} iterates through all states between~$s$ and the state at~$t_{rc}$ (inclusively) to identify the one that covers~$g$ (loop at line~\ref{alg:query:line:c2a}). Since~$g$ is covered by at least one of these states by Alg.~\ref{alg:preprocess}, Alg.~\ref{alg:query} is guaranteed to find a path from $s$ to $g$.

\end{proof}

\begin{lemma}[Constant-time complexity]
\label{lemma:bounded_time}
Let $s$ be a replanable state and $g$ a goal provided by $\calP$.
If~$g$ is reachable, the planner is guaranteed to provide a solution in constant time.
\end{lemma}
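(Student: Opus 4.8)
The plan is to bound the running time of the query procedure (Alg.~\ref{alg:query}) by a constant that depends only on the fixed parameters of the system --- the replan cutoff~\Trc, the discretization~$\delta_t$, the time bound~\Tbound, the robot, and the (finite) collection of root paths produced by preprocessing --- and in particular does \emph{not} depend on the goal~$g$ or on the robot's current state. First I would invoke the Completeness lemma (more precisely, the coverage property established in its proof): since~$s$ is a replanable state we have $t(s)\le\Trc$, and~$g$ is reachable from~$s$ by assumption, so~$g$ is \emph{covered} by~$s$. Concretely this means there is a state~$s'$ on~$\pi_{\rm curr}$, between~$s$ (the \Sstart argument of Alg.~\ref{alg:query}) and the state at time~\Trc inclusive, for which either $\calM(s',g)$ returns a root path usable as an experience, or a latch onto a root path from~\Shome associated with~$g$ succeeds; in either case the corresponding call to \textsc{PlanPathWithExperience} returns within the time budget used by preprocessing, which is chosen to be at most~\Tbound.

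Next I would simply count the operations performed by \textsc{Query} before it returns at~$s'$. The outer \textbf{for} loop ranges over the states of~$\pi_{\rm curr}$ from the state at~\Trc back to~\Sstart; these states are spaced~$\delta_t$ apart in time, so there are at most $\ell=\lceil \Trc/\delta_t\rceil+1$ of them, a fixed constant. Each iteration does a lookup $\calM(s,g)$, which is $O(1)$ under perfect hashing~\cite{czech1997perfect}, and (in the latching variant) at most one further lookup $\calM(\Shome,g)$ and one call to \textsc{CanLatch}, which interpolates a single primitive and runs one collision/kinodynamic check --- again $O(1)$. \textsc{PlanPathWithExperience} is invoked at most once over the whole execution: in the first iteration where the lookup or the latch test succeeds, the procedure builds $\pi_{\rm next}$ (resp.\ $\pi_{\rm home}$), merges it with~$\pi_{\rm curr}$, and returns; by the previous paragraph such an iteration exists. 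The final \textsc{MergePaths} concatenates the prefix of~$\pi_{\rm curr}$ ending at~$s'$ (at most~$\ell$ states) with~$\pi_{\rm next}$, whose length is bounded by the number of wA* expansions in a~\Tbound-bounded search plus the length of the reused root-path segment $\le\max_i\lvert\Pi_i\rvert$; both are constants, so the merge is $O(1)$. Summing, the query does $O(\ell)$ constant-time lookups and checks, one bounded experience replan, and one bounded merge --- a constant overall --- and by giving the experience planner a budget slightly below~\Tbound (the footnote after Alg.~\ref{alg:preprocess}) the whole thing can be made to fit within~\Tbound.

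The step I expect to be the real obstacle is arguing that the single \textsc{PlanPathWithExperience} call is genuinely constant-time at query time: heuristic search is not intrinsically bounded, so this cannot follow from a generic complexity bound on wA*. The argument I would give is that preprocessing already \emph{certified} this exact triple $(s',g,\Pi)$ to terminate within budget, and that the search is reproduced faithfully online --- wA* with motion primitives is deterministic, and by Assumption~\ref{assum:5} the environment is static for all $t\le\Trc$, so the graph, the heuristic values, the primitive set, and every collision outcome seen online coincide with those seen during preprocessing; hence the online run performs exactly the same expansions and terminates within the same budget. Pinning down this equivalence precisely --- that the state whose coverage was recorded in~\calM is reached from the same \Sstart, with the same root path attached as an experience, under the same obstacle configuration --- is the crux; everything else is bookkeeping over the finitely many, fixed-size structures built offline.
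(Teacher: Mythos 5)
Your proof is correct and follows essentially the same route as the paper's: bound the number of loop iterations by $\Trc/\delta_t$, observe that each hash lookup and \textsc{CanLatch} test is $O(1)$, and note that the single experience-based planning call is bounded by \Tbound because preprocessing certified that the relevant state covers~$g$. Your additional discussion of why the online search faithfully reproduces the offline one (determinism of wA* plus the static-environment assumption) makes explicit a point the paper leaves implicit in its definition of ``covered,'' but it is an elaboration of the same argument, not a different one.
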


\begin{proof}
    We have to show that the query stage (Alg.~\ref{alg:query}) has a constant-time complexity. 
    The number of times the algorithm queries $\calM$ which is $O(1)$ operation in case of perfect hashing is bounded by $l = \Trc/\delta_t$ which is the maximum number of time steps from $t = 0$ to $\Trc$. 
    The number of times the algorithm will attempt to latch on to a root path (namely, a call to \textsc{CanLatch}  which is a constant-time operation) is also bounded by $l$. Finally, Alg.~\ref{alg:query} calls the \textsc{Plan} method only once.
    Since the state that it is called for covers $g$, meaning that the planner can find a path from it to $g$ within \Tbound, the computation time is constant. 
    Hence the overall complexity of Alg.~\ref{alg:query} is $O(1)$.
\end{proof}

%

\ignore{
Before describing how this is done, consider two root paths $\Pi_i$ and $\Pi_j$ with associated goal regions $G_i$ and~$G_j$, respectively.
Now, let $s_i$ and $s_j$ be states on $\Pi_i$ and $\Pi_j$, respectively, such that the timestamp associated with $s_j$ is $\delta_t$ time after the one associated with $s_i$. Furthermore, assume that the path between $s_i$ and $s_j$ is collision free. 
Now, assume that the robot is executing path $\Pi_i$ (targeting a goal in $G_i$) and the perception system updates the goal to be reached as $g_j \in G_j$. 
If the robot did not yet reach $s_i$ then it can reach $g_j$ by
(i)~continuing to follow $\Pi_i$ until $s_i$ is reached, 
(ii)~move to $s_j$ on $\Pi_j$ and 
(iii)~use $\Pi_j$ to reach $g_j$.
We term the process we just described of moving from one root path to another as ``latching'' on to a new root path.

Let $s_{\Pi_i, t}$ be the state that is $t$ time from \Shome on path $\Pi_i$.
If a collision-free path existed from $s_{\Pi_i, \Trc}$ to $s_{\Pi_j, \Trc+\delta_t}$ for every $i,j$ then we could latch on from any root path to any other root path. Moreover, following Assumption~\ref{assum:4}, $\Trc$ is the last time that the perception could update the goal location so no other replanning would be required.
Unfortunately, this may not be the case.
Thus, for every root path $\Pi_i$, we consider $s_{\Pi_i, \Trc}$ and check if we can latch on to all other root paths. 
If this can't be done, then we can tr

considering the last replanning state $s_{\Pi_i, \Trc}$ (namely, the state that is $t=\Trc$ time from \Shome). For every other root path $\Pi_j$, we test if the path connecting $s_{\Pi_i, \Trc}$ to $s_{\Pi_j, \Trc + \delta_t}$ (the state on $\Pi_j$ that is $\Trc+\delta_t$ away from \Shome) is collision free. 

In the straw man algorithm this was obtained by recursively computing a path for the replanning states along all the previously-computed paths.
Here, we attempt to re-use the previously-computed paths as much as possible by ``latching'' onto them.

More formally, for each root path~$\Pi_i$, we start by considering the last replanning state $s_{\Pi_i, \Trc}$ (namely, the state that is $t=\Trc$ time from \Shome). For every other root path $\Pi_j$, we test if the path connecting $s_{\Pi_i, \Trc}$ to $s_{\Pi_j, \Trc + \delta_t}$ (the state on $\Pi_j$ that is $\Trc+\delta_t$ away from \Shome) is collision free. 
If this is the case we know that any goal in $G_j$ can be
}

\vspace{-1mm}
\section{Evaluation}
\label{sec:eval}
We evaluated our algorithm in simulation and on a real robot. The conveyor speed that we used for all of our results is $0.2m/s$. The experiments video can be found at~\href{url}{https://youtu.be/iLVPBWxa5b8}. We used Willow Garage's PR2 robot in our experiments using its 7-DOF arm. The additional time dimension makes the planning problem eight dimensional.

\begin{table*}[t]
\centering
\begin{tabular}{|c||c||c|c|c||c|c|c||c|c|c|}
\hline
   & Our Method 
   & \multicolumn{3}{c|}{wA*}
   & \multicolumn{3}{c|}{E-Graph}
   & \multicolumn{3}{c|}{RRT}
   \\ \hline
   & $T_{b}$ = \textbf{0.2} 
   & $T_{b}$ = 0.5 & $T_{b}$ = 1.0 & $T_{b}$ = 2.0 
   & $T_{b}$ = 0.5 & $T_{b}$ = 1.0 & $T_{b}$ = 2.0 
   & $T_{b}$ = 0.5 & $T_{b}$ = 1.0 & $T_{b}$ = 2.0 
   \\ \hline
Pickup success [\%]                   
& \textbf{92.0} & 0.0 & 0.0 & 18.0 & 0.0 & 0.0 & 80.0 & 0.0 & 0.0 & 18.0 \\ \hline
Planning success [\%]                  
& \textbf{94.7} & 4.0 & 17.0 & 19.0 & 31.0 & 80.0 & 90.0 & 12.0 & 9.0 & 13.0 \\ \hline
Planning time [s]
& \textbf{0.069} & 0.433 & 0.628 & 0.824 & 0.283 & 0.419 & 0.311 & 0.279 & 0.252& 0.197\\ \hline
Planning cycles 
& \textbf{3} & 2 & 2 & 2 & 2 & 2 & 2 & 2 & 2 & 2 \\ \hline
Path cost [s]                         & 10.11        & 8.19          & 8.28          & \textbf{7.60}          & 8.54          & 8.22          & 7.90          & 9.68          & 8.96          & 8.04          \\ \hline
\end{tabular}
\caption{\CaptionTextSize Simulation results. Here $T_b$ denotes the (possibly arbitrary) timebound that the algorithm uses. Note that for our method $T_b = \Tbound$ is the time bound that the algorithm is ensured to compute a plan.}
\label{tab:sim_results}
\vspace{-2mm}
\end{table*}

\subsection{Experimental setup}

\subsubsection{Sense-plan-act cycle}
As object $o$ moves along $\calB$, we use the Brute Force ICP pose estimation baseline proposed in \cite{perch} to obtain its 3-Dof pose for each captured input point cloud.
The plan is computed for the sensed object pose projected forward by \Tbound time, giving the planner \Tbound time to plan. If the plan comes in earlier, the robot waits until the object reaches the projected pose, before executing the plan to ensure that timing is tracked properly.

%
%

\subsubsection{Goal region specification}
To define the set of all goal poses~$\Gfull$, we need to detail our system setup, depicted in Fig.~\ref{fig:pe}.
The conveyor belt $\calB$ moves along the $x$-axis from left to right.
We pick a fixed $x$-value termed~\Xexec, such that when the incoming $o$ reaches \Xexec as per the perception information, at that point we start execution.

%
Recall that a pose of an object~$o$ is a three dimensional point~$(x,y,\theta)$ corresponding to the $(x,y)$ location of~$o$ and to its orientation (yaw angle) along $\calB$.
$\Gfull$ contains a fine discretization of all possible $y$ and $\theta$ values and $x$ values in  $[\Xexec-2\varepsilon_{\calP}, \Xexec+2\varepsilon_{\calP}]$.
We select $\Gfull$ such that $\varepsilon_{\calP} = 2.5$cm, making the goal region 10cm long along x-axis. Its dimension along $y$-axis is 20cm, equal to the width of the \calB. The discretization in $x,y$ and $\theta$ is 1.0cm and 10 degrees respectively.

In the example depicted in Fig.~\ref{fig:pe}, the thick and the thin solid rectangles show the ground truth and estimated poses, respectively at two time instances in the life time of the object.
The first plan is generated for the pose shown at $x_{\textrm{exec}}$. During execution, the robot receives an improved estimate and has to replan for it. At this point we back project this new estimate in time using the known speed of the conveyor and the time duration between the two estimates. This back-projected pose (shown as the dotted rectangle) is then picked as the new goal for replanning. Recall that under the assumption~\ref{assum:3} the back projected pose will always lie inside \Gfull.
%

\begin{figure}
\vspace{2mm}
    \centering
    \includegraphics[width=0.49\textwidth]{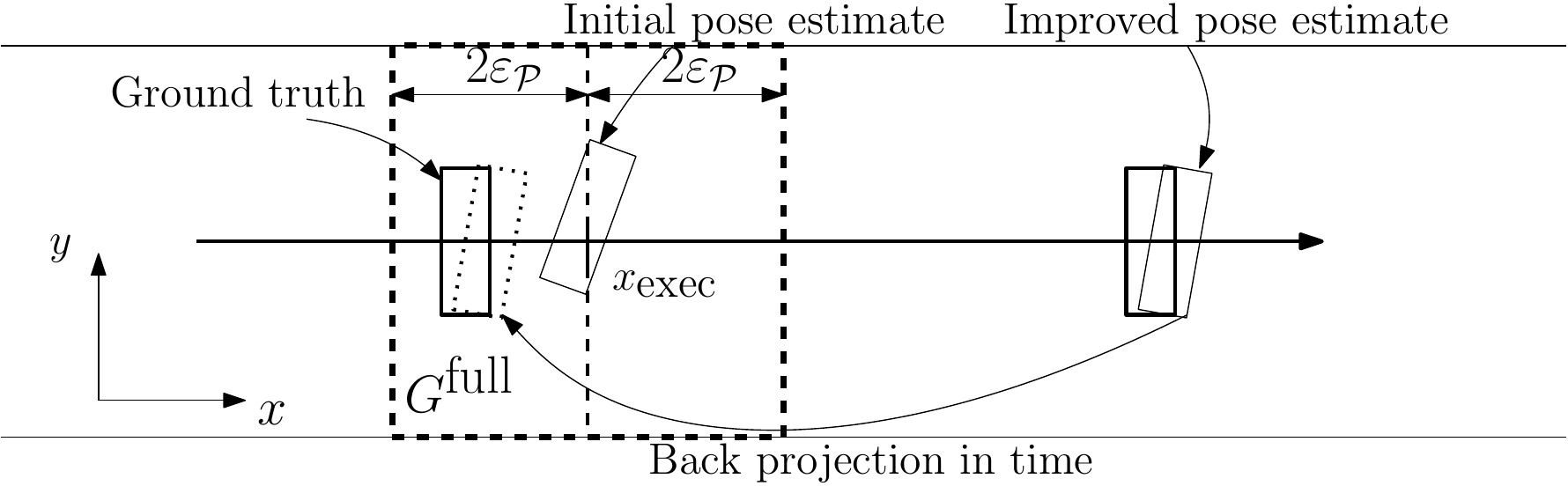}
    \caption{\CaptionTextSize A depiction of \Gfull-specification on a conveyor belt (overhead view) and perception noise handling. 
    }
    \label{fig:pe}
        \vspace{-4mm}
\end{figure}

\subsection{Results}

The preprocessing stage (i.e. running Alg.~\ref{alg:preprocess}) took roughly 3.5 hours and the memory footprint following this stage was less than 20Mb.
This supports our intuition that the domain allows for efficient compression of a massive amount of paths in a reasonable amount of preprocessing time. In all experiments, we used $\Trc =3.5$s and $\delta_t = 0.5$s.

\subsubsection{Real-robot experiments}
\label{sec:robot_results}
To show the necessity of real-time replanning in response to perception updates, we performed three types of experiments, 
(E1)~using our approach to replan every time new object pose estimate arrives, 
(E2)~single-shot planning based on the first object pose estimate 
(E3)~single-shot planning using the late (more accurate) pose estimate. 
For each set of experiments, we determined the pickup success rate to grasp the moving object (sugar box) off~$\calB$. In addition, we report on $\calP$'s success rate by observing the overlap between the point cloud of the object's 3D model transformed by the predicted pose (that was used for planning) and the filtered input point cloud containing points belonging to the object. 
A high (low) overlap corresponds in an accurate (inaccurate) pose estimate. 
We use the same strategy to determine the range for which $\calP$'s estimates are accurate and use it to determine the time for the best-pose planning. 
Further, for each method, we determine the pickup success rate given that the \calP's estimate was or wasn't accurate. 

The experimental results are shown in Table \ref{tab:robot_results}. Our method achieves the highest overall pickup success rate on the robot by a large margin, indicating the importance of continuous replanning with multiple pose estimates. 
First-pose planning has the least overall success rate due to inaccuracy of pose estimates when the object is far from the robot's camera. 
Best-pose planning performs better overall than the first pose strategy, since it uses accurate pose estimates, received when the object is close to the robot. However it often fails even when perception is accurate, since a large number of goals are unreachable due to limited time remaining to grasp the object when it is closer to the robot.

\begin{table}[t]
\centering
\begin{tabular}{|c|c|c|c|c|}
\hline
                    & \begin{tabular}[c]{@{}c@{}}Success \\ rate\end{tabular}
                    & \begin{tabular}[c]{@{}c@{}}Accuracy of \\ \calP $[\%]$ \end{tabular} 
                    & \begin{tabular}[c]{@{}c@{}}Success rate \\ (Accurate \calP)\end{tabular} 
                    & \begin{tabular}[c]{@{}c@{}}Success rate \\ (Inaccurate \calP)\end{tabular} \\ \hline
E1          & \textbf{69.23}                                                                      & 42.31                   & \textbf{83.33}                                                                             & \textbf{57.14}                                                                             \\ \hline
E2 & 16.00                                                                      & 24.00                   & 66.67                                                                             & 0.00                                                                              \\ \hline
E3   & 34.61                                                                      & 34.62                   & 55.56                                                                             & 23.53                                                                             \\ \hline
\end{tabular}
\caption{\CaptionTextSize Real-robot experiments. Success rate for the three experiments (E1---our method, E2---First-pose planning and E3---Best-pose planning).}
\label{tab:robot_results}
\vspace{-4mm}
\end{table}

\ignore{
\begin{table*}[t]
\centering
\begin{tabular}{|c|c|c|c|c|}
\hline
                    & \begin{tabular}[c]{@{}c@{}}Pickup success rate [\%]\\ (Overall)\end{tabular} & Perception success rate [\%]& \begin{tabular}[c]{@{}c@{}}Pickup success rate [\%]\\ (Perception = 1*)\end{tabular} & \begin{tabular}[c]{@{}c@{}}Pickup success rate [\%]\\ (Perception = 0*)\end{tabular} \\ \hline
Our method (E1)          & \textbf{9.23}                                                                      & \textbf{42.31}                   & \textbf{83.33}                                                                             & \textbf{57.14}                                                                             \\ \hline
First-pose planning (E2) & 16.00                                                                      & 24.00                   & 66.67                                                                             & 0.00                                                                              \\ \hline
Best-pose planing (E3)   & 34.61                                                                      & 34.62                   & 55.56                                                                             & 23.53                                                                             \\ \hline
\end{tabular}
\caption{Real-robot Experiments}
\label{tab:robot_results}
\end{table*}
}

\subsubsection{Simulation experiments}

We simulated the real world scenario to evaluate our method against other baselines. We compared our method with wA*~\cite{pohl1970heuristic}, E-graph~\cite{PCCL12} and RRT~\cite{lavalle1998rapidly}. 
For wA* and E-graph we use the same graph representation as our method. 
For E-graph we precompute five paths to randomly-selected  goals in \Gfull. 
We adapt the RRT algorithm to account for the under-defined goals. To do so, we sample pre-grasp poses along the conveyor 
and compute IK solutions for them to get a set of goal configurations for goal biasing. 
When a newly-added node falls within a threshold distance from the object, we use the same dynamic primitive that we use in the search-based methods to add the final grasping maneuver. If the primitive succeeds, we return success. We also allow wait actions at the pre-grasp locations.

For any planner to be used in our system, we need to endow it with a (possibly arbitrary) planning time bound to compute the future location of the object from which the new execution will start.
%
If the planner fails to generate the plan within this time, then the robot misses the object for that cycle and such cases are recorded as failures. 
We label a run as a success if the planner successfully replans once after the object crosses the 1.0m mark. The mark is the mean of accurate perception range that was determined experimentally and used in the robot experiments as described in Section \ref{sec:robot_results}.
The key takeaway from our experiments (Table~\ref{tab:sim_results}) is that having a known time bound on the query time is vital to the success of the conveyor pickup task.

Our method shows the highest pickup success rate, planning success rate (success rate over all planning queries) and an order of magnitude lower planning times compared to the other methods. 
The planning success rate being lower than 100\% can be attributed to the fact that some goals were unreachable during the runs. 
We tested the other methods with several different time bounds. After our approach E-graph performed decently well. RRT suffers from the fact that the goal is under-defined and sampling based planners typically require a goal bias in the configuration space. Another important highlight of the experiments is the number of planning cycles over the lifetime of an object. While the other approaches could replan at most twice, our method was able to replan thrice due to fast planning times.

\ignore{
\begin{table*}[t]
\centering
\begin{tabular}{|c||c||c|c|c||c|c|c||c|c|c|}
\hline
   & \textbf{Our Method} 
   & \multicolumn{3}{c|}{wA*}
   & \multicolumn{3}{c|}{E-Graph}
   & \multicolumn{3}{c|}{RRT}
   \\ \hline
   & $T_{b}$ = 0.2 
   & $T_{b}$ = 0.5 & $T_{b}$ = 1.0 & $T_{b}$ = 2.0 
   & $T_{b}$ = 0.5 & $T_{b}$ = 1.0 & $T_{b}$ = 2.0 
   & $T_{b}$ = 0.5 & $T_{b}$ = 1.0 & $T_{b}$ = 2.0 
   \\ \hline
Pickup success rate [\%]                   & 92.00     & 0.00      & 0.00     & 18.00      & 0.00        & 0.00       & 80.00       & 0.00       & 0.00       & 18.00      \\ \hline
Planning success rate [\%]                  & 94.67     & 4.00       & 17.00      & 19.00       & 31.00    & 80.00       & 90.00      & 12.00      & 9.00       & 13.00       \\ \hline
Planning time [s]                    & 0.0689       & 0.4329        & 0.6284        & 0.8241        & 0.2830        & 0.4194        & 0.3112        & 0.2718        & 0.2518        & 0.1966        \\ \hline
Planning episodes per pickup & 3            & 2             & 2             & 2             & 2             & 2             & 2             & 2             & 2             & 2             \\ \hline
Path cost [s]                         & 10.11        & 8.19          & 8.28          & 7.60          & 8.54          & 8.22          & 7.90          & 9.68          & 8.96          & 8.04          \\ \hline
\end{tabular}
\caption{Simulation results. Here $T_b$ denotes the (possibly arbitrary) timbound that the algorithm uses. Note that for our method $T_b = \Tbound$ is the time bound that the algorithm is ensured to compute a plan.}
\label{tab:sim_results}
\end{table*}
}


\vspace{-3mm}
\section{Conclusion}
To summarize, we developed a provably constant-time planning and replanning algorithm that can be used to grasp fast moving objects off conveyor belts and evaluated it in simulation and in the real world on the PR2 robot. Through this work, we advocate the need for algorithms that guarantee (small) constant-time planning for time critical applications, such as the conveyor pickup task, which are often encountered in warehouse and manufacturing environments.


\ignore{
\begin{figure}
    \centering
    \includegraphics[width=0.4\textwidth]{figs/preprocess.png}
    \caption{Preprocessing}
    \label{fig:preprocessing}
\end{figure}
}




\balance
\bibliographystyle{unsrt}
\bibliography{references}

\end{document}